\documentclass[11pt]{article}
\usepackage[margin=1in]{geometry}
\usepackage{lmodern}
\usepackage[round,authoryear]{natbib}
\usepackage[T1]{fontenc}
\usepackage[utf8]{inputenc}
\usepackage{amsmath,amssymb,amsthm,bm}
\usepackage{booktabs,tabularx,longtable}
\usepackage{graphicx}
\usepackage[font=small,labelfont=bf]{caption}
\usepackage{placeins}
\usepackage{needspace}
\usepackage{etoolbox}
\newtheorem{theorem}{Theorem}
\newtheorem{proposition}{Proposition}
\AtBeginEnvironment{theorem}{\par\noindent\begin{minipage}{\linewidth}}
\AtEndEnvironment{theorem}{\end{minipage}\par\medskip}
\AtBeginEnvironment{proposition}{\par\noindent\begin{minipage}{\linewidth}}
\AtEndEnvironment{proposition}{\end{minipage}\par\medskip}
\usepackage{algorithm}
\usepackage{flafter}
\usepackage{algpseudocode}
\usepackage{microtype}
\usepackage{hyperref}
\usepackage{bookmark}
\hypersetup{colorlinks=true,linkcolor=black,citecolor=black,urlcolor=black,
  pdftitle={Spectral Graph Neural Networks with Hermite Polynomials: A Comprehensive Study},
  pdfauthor={Shuang Wu}}
\DeclareMathOperator*{\argmin}{arg\,min}
\newcommand{\best}[1]{\ensuremath{\mathbf{#1}}}
\newcommand{\key}[1]{\ensuremath{\mathbf{#1}^{\dagger}}}
\newcommand{\HermitePoly}{\texttt{HermNet}}
\newcommand{\ChebNet}{\texttt{ChebNet}}

\newcommand{\ChebNetII}{\texttt{ChebNetII}}

\newcommand{\APPNP}{\texttt{APPNP}}
\newcommand{\GPRGNN}{\texttt{GPR-GNN}}
\newcommand{\BernNet}{\texttt{BernNet}}
\newcommand{\JacobiConv}{\texttt{JacobiConv}}
\newcommand{\FavardGNN}{\texttt{FavardGNN}}
\newcommand{\OptBasisGNN}{\texttt{OptBasisGNN}}

\title{Spectral Graph Neural Networks with Hermite Polynomials:\\A Comprehensive Study}
\author{Shuang Wu\\UCLA\\\texttt{shuangwu222@ucla.edu}}
\date{}

\begin{document}
\maketitle
\begin{abstract}
We study spectral graph neural networks built from Hermite polynomials and propose HermNet, a simple model that combines a nodewise predictor with normalized Hermite propagation. Its sparse recurrence requires neither eigendecomposition nor a learned basis. We distinguish the basic model from optional coordinate calibration, response normalization and Gaussian derivative regularization. Hermite and other complete polynomial bases span the same degree-bounded filter space, but their coordinates can produce different optimization behavior under limited training budgets. We analyze this behavior through spectral signal energy, label sampling, changes in learned features and the bias--variance trade-off of regularization. Controlled synthetic experiments identify a regime in which plain HermNet outperforms matched polynomial-basis alternatives, including with a jointly trained nonlinear predictor. Curvature regularization further improves HermNet when the same functional penalty is available to every comparator. Fixed-predictor controls support the advantage under short training budgets, but longer training removes the plain-model lead. Matched real-data comparisons show accuracy deficits, and architectural and numerical studies identify further limits. Together, the analysis and experiments clarify when Hermite propagation is useful and how calibration and regularization affect its performance.
\end{abstract}

\section{Introduction}
\label{sec:introduction}
Polynomial spectral graph neural networks learn a graph-frequency response through repeated sparse propagation. Different polynomial bases provide different coordinates for this response. Chebyshev, Bernstein and generalized PageRank models have made this approach practical across graph-learning tasks \citep{defferrard2016,chien2021,he2021,he2022}. A basic design question remains: when is a simple, fixed polynomial family a useful choice for learning a graph filter?

We propose \HermitePoly{}, a nodewise predictor followed by normalized probabilists' Hermite propagation. Algorithm~\ref{alg:filter} is the complete propagation mechanism. It uses unrestricted coefficients, a prescribed center and scale, and a three-term recurrence. Neither an eigendecomposition nor a learned basis is required. The plain model contains no reference calibration, response normalization or filter penalty. We introduce these operations separately as optional enhancements, so that their effects can be assessed independently of the proposed propagation.

Hermite polynomials offer a specific statistical structure. They are orthonormal under a Gaussian measure, and their derivatives have diagonal squared norms in the same coordinates. For graph filtering, the relevant measure weights each eigenvalue by the energy of the filtered signal. An approximately Gaussian eigenvalue histogram is therefore insufficient by itself. The usefulness of Hermite coordinates depends on spectral signal energy, polynomial degree, label coverage and the changing predictor. This connection motivates a model and testable conditions rather than a distribution-free performance claim.

Complete degree-$K$ bases span the same filter space, but their coordinates can change finite-step optimization. We therefore compare plain propagation and shared functional penalties separately. Exact minimizer sets correspond under an invertible basis change when the design and penalty are transformed consistently.

The paper makes three contributions:
\begin{enumerate}
\item \textbf{A simple model with explicit extensions.} We formulate HermNet for classification and regression, give its sparse propagation algorithm, and separate prescribed coordinates from data-dependent calibration and functional regularization.
\item \textbf{Properties that explain its scope.} We relate the Hermite response Gram to spectral signal energy, derive conditioning bounds under label sampling and feature changes, and separate regularization risk from the representation available to a fixed input. Further results establish a degree--scale constraint and a fixed-filter perturbation bound.
\item \textbf{A verified downstream regime.} Independent synthetic evaluations identify lower prediction error after five updates against five matched complete bases. The result holds with a jointly learned nonlinear predictor, where a common curvature penalty also improves HermNet. Real-data, longer-budget, native-model and numerical controls delimit the advantage.
\end{enumerate}

Earlier Hermite filtering applications are reviewed in Section~\ref{sec:related}. Our favorable experiment establishes an update-budget advantage, with native-model performance and total fitting cost assessed separately.

\section{Related Work}
\label{sec:related}
\paragraph{Polynomial filters and adaptive bases.}
ChebNet evaluates localized Chebyshev filters \citep{defferrard2016}, PPNP/APPNP separate prediction from propagation \citep{gasteiger2019}, and GPR-GNN learns signed propagation weights \citep{chien2021}. BernNet, ChebNetII and JacobiConv study alternative polynomial parameterizations \citep{he2021,he2022,wang2022}. FavardGNN and OptBasisGNN learn or adapt the basis \citep{guo2023}, while AdaptKry and UniFilter adapt propagation to the graph \citep{huang2024krylov,huang2024universal}. ClenshawGNN and GCNII introduce recurrence-inspired or residual architectures \citep{guo2023clenshaw,chen2020}, and FAGCN combines low- and high-frequency information \citep{bo2021}. HermNet uses a fixed Hermite recurrence within the same complete degree-bounded polynomial space. Its contribution concerns the learning procedure and the conditions under which those coordinates are useful.

\paragraph{Regularization and spectral extensions.}
NewtonNet learns interpolation values with a graph-dependent shape prior \citep{xu2024shape}. Filter smoothness also controls graph perturbation sensitivity \citep{gama2020}, although piecewise constant components can better represent sharp responses \citep{martirosyan2025}. Triple Filter Ensembles combine low- and high-pass families \citep{duan2024}, PolyGCL learns polynomial filters for contrastive views \citep{chen2024polygcl}, and SLOG extends inductive spectral learning beyond polynomials \citep{xu2024slog}. Our Gaussian derivative penalties are function-space preferences with a simple diagonal form in Hermite coordinates. Transforming them into every comparator basis permits controlled evaluation without changing the underlying preference.

\paragraph{Hermite filtering and evaluation.}
Hermite filters precede this work in cortical learning, LSAP diffusion and PolyCF recommendation \citep{huang2020,sim2024,qin2024}. Polynomial network constructions provide a complementary numerical perspective \citep{tang2024}. We study a simple supervised HermNet, its Gaussian spectral-energy connection and its finite-budget downstream behavior. Recent work questions whether conventional Fourier interpretations explain practical spectral-GNN gains \citep{guo2025position}, benchmarks effectiveness together with memory and efficiency \citep{liao2025}, and identifies weaknesses in heterophily evaluation \citep{platonov2023}. These findings motivate separate assessments of coordinate choice, regularization, architecture and total fitting cost. The literature reviewed here is broader than the experimental baseline roster.

\section{Simple HermNet}
\label{sec:method}
\label{sec:problem}
\subsection{Learning problem and spectral signal energy}
Let $G=(V,E,\mathbf A)$ be an undirected graph with $n$ nodes, symmetric nonnegative adjacency $\mathbf A$, features $\mathbf X\in\mathbb R^{n\times d}$, and normalized Laplacian
\begin{equation}
\mathbf L=\mathbf I_n-\mathbf D^{-1/2}\mathbf A\mathbf D^{-1/2}
=\mathbf U\boldsymbol\Lambda\mathbf U^\top,
\qquad 0\leq\lambda_i\leq2.
\label{eq:laplacian}
\end{equation}
Here $D_{ii}=\sum_j A_{ij}$ and $(D^{-1/2})_{ii}=0$ for an isolated node. Self-loops follow the declared graph construction and remain fixed within a matched comparison. Disjoint sets $\mathcal T,\mathcal V,\mathcal S$ specify training, validation and test nodes. The full graph and features are observed, training labels determine gradients, validation labels select configurations and checkpoints, and test targets enter evaluation only.

Let $f_{\boldsymbol\phi}:\mathbb R^d\to\mathbb R^C$ be a nodewise predictor. For classification, $C$ is the class count. For scalar regression, $C=1$. With $\mathbf H=f_{\boldsymbol\phi}(\mathbf X)$ and $g_c\in\mathcal P_K$, the polynomials of degree at most $K$, define
\begin{equation}
\mathbf z_c=g_c(\mathbf L)\mathbf h_c,
\qquad \mathbf Z=[\mathbf z_1,\ldots,\mathbf z_C],
\qquad P_{ic}=\frac{\exp Z_{ic}}{\sum_a\exp Z_{ia}}.
\label{eq:prediction}
\end{equation}
The supervised loss is
\begin{equation}
\mathcal L_{\mathcal T}=
\begin{cases}
-|\mathcal T|^{-1}\sum_{i\in\mathcal T}\log P_{iy_i}, &\text{classification},\\[2pt]
(2|\mathcal T|)^{-1}\sum_{i\in\mathcal T}(Z_i-y_i)^2, &\text{regression}.
\end{cases}
\label{eq:supervised_loss}
\end{equation}
The polynomial stage is linear conditional on $\mathbf H$. Learning $\boldsymbol\phi$ makes the complete predictor nonlinear. The identity predictor is used only in explicitly designated fixed-feature experiments.

Let $\mathbf R\in\mathbb R^{n\times d_R}$ be a nonzero matrix of node signals, with $d_R$ channels. It specifies the signal whose spectral energy is measured. Examples are the current predictor output $\mathbf R=\mathbf H$, the input $\mathbf R=\mathbf X$, or a frozen reference output $\mathbf R=f_{\boldsymbol\phi_0}(\mathbf X)$ used only for optional calibration. For coordinates $(\mu,s)$ with $s>0$, define
\begin{equation}
\nu_{\mathbf R}^{\mu,s}
=\sum_{i=1}^{n}w_i\,\delta_{(\lambda_i-\mu)/s},
\qquad w_i=\frac{\|\mathbf u_i^\top\mathbf R\|_2^2}{\|\mathbf R\|_F^2},
\qquad \sum_iw_i=1.
\label{eq:measure}
\end{equation}
Weights within a repeated eigenspace add, making the measure independent of the eigenbasis chosen there. Gaussian input entries do not imply a Gaussian measure in \eqref{eq:measure}. The latter describes energy over graph frequencies, not the marginal distribution of node features.

\begin{figure}[tbp]
\centering
\includegraphics[width=\linewidth]{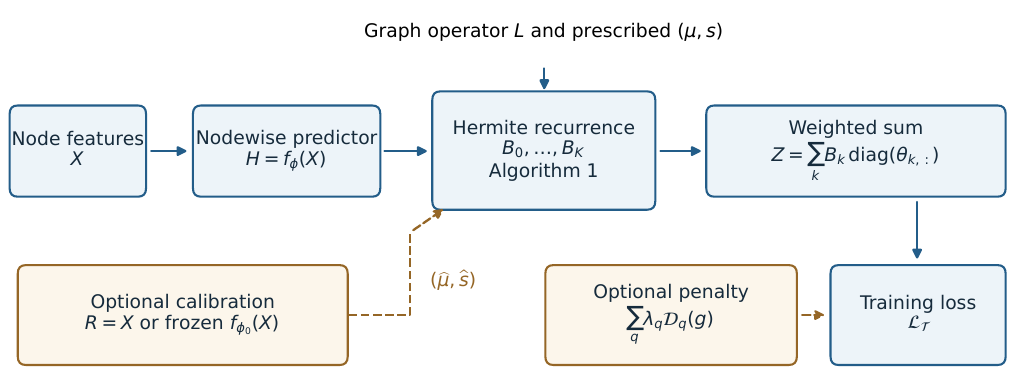}
\caption{\textbf{Simple HermNet and its optional extensions.} Solid arrows give the plain model: a nodewise predictor, Algorithm~\ref{alg:filter}, and unrestricted channelwise coefficients. Dashed arrows show optional coordinate calibration from a nonzero signal matrix $\mathbf R$ and a Gaussian functional penalty. Prescribed coordinates require no reference fit.}
\label{fig:overview}
\end{figure}

\subsection{Hermite propagation}
Define normalized probabilists' Hermite polynomials by
\begin{equation}
\begin{aligned}
h_0(z)&=1,& h_1(z)&=z,\\
h_{k+1}(z)&=\frac{z h_k(z)}{\sqrt{k+1}}-\sqrt{\frac{k}{k+1}}h_{k-1}(z),& k&\geq1.
\end{aligned}
\label{eq:hermite}
\end{equation}
For $W\sim\mathcal N(0,1)$, $\mathbb E[h_j(W)h_k(W)]=\delta_{jk}$ \citep{nistHermite}. These are polynomials without a Gaussian envelope. With $\mathbf S=(\mathbf L-\mu\mathbf I_n)/s$, HermNet uses
\begin{equation}
 g_c(\lambda)=\sum_{k=0}^{K}\theta_{kc}h_k\!\left(\frac{\lambda-\mu}{s}\right),
 \qquad \mathbf Z=\sum_{k=0}^{K}h_k(\mathbf S)\mathbf H\operatorname{diag}(\boldsymbol\theta_{k,:}).
\label{eq:filter}
\end{equation}
The coefficients $\boldsymbol\Theta\in\mathbb R^{(K+1)\times C}$ are unrestricted. The response can therefore be low-pass, high-pass or mixed. Algorithm~\ref{alg:filter} evaluates \eqref{eq:filter} using sparse products with $\mathbf L$.

\label{sec:basic}
The plain model trains \eqref{eq:filter} under \eqref{eq:supervised_loss} with no filter penalty. The fixed-coordinate version uses $(\mu,s)=(1,1)$. The prescribed-scale version selects $s$ from a finite menu fixed before observing an evaluation draw. Both use Algorithm~\ref{alg:filter} alone for propagation. Prescribed scales are hyperparameters, not estimates of spectral moments.

\begin{algorithm}[tbp]
\caption{Simple HermNet propagation}
\label{alg:filter}
\begin{algorithmic}[1]
\Require $\mathbf L$, $\mathbf H=f_{\boldsymbol\phi}(\mathbf X)$, $K\geq0$, $\mu\in\mathbb R$, $s>0$, $\boldsymbol\Theta$
\State $\mathbf B_0\gets\mathbf H$, \quad $\mathbf Z\gets\mathbf B_0\operatorname{diag}(\boldsymbol\theta_{0,:})$
\If{$K\geq1$}
  \State $\mathbf B_1\gets(\mathbf L\mathbf B_0-\mu\mathbf B_0)/s$
  \State $\mathbf Z\gets\mathbf Z+\mathbf B_1\operatorname{diag}(\boldsymbol\theta_{1,:})$
  \For{$k=1,\ldots,K-1$}
    \State $\mathbf B_{k+1}\gets\dfrac{\mathbf L\mathbf B_k-\mu\mathbf B_k}{s\sqrt{k+1}}-\sqrt{\dfrac{k}{k+1}}\mathbf B_{k-1}$
    \State $\mathbf Z\gets\mathbf Z+\mathbf B_{k+1}\operatorname{diag}(\boldsymbol\theta_{k+1,:})$
  \EndFor
\EndIf
\State \Return $\mathbf Z$
\end{algorithmic}
\end{algorithm}

\subsection{Function class, locality and cost}
\begin{proposition}[Complete polynomial propagation]
\label{prop:complete}
For any fixed $\mu\in\mathbb R$, $s>0$ and degree $K$, the functions $h_k((\lambda-\mu)/s)$, $0\leq k\leq K$, form a basis of $\mathcal P_K$. HermNet is $K$-hop local. Writing $m=\operatorname{nnz}(\mathbf L)+n$, its propagation costs $\mathcal O(KmC)$ for $K\geq1$, with $\mathcal O(nC)$ additional inference memory.
\end{proposition}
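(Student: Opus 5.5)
The proof has three parts: the basis claim, locality, and cost. For the basis claim, I will first show by induction on the recurrence \eqref{eq:hermite} that $h_k$ has degree exactly $k$ with leading coefficient $1/\sqrt{k!}$. The base cases are $h_0=1$ and $h_1=z$. In the step, the term $z h_k(z)/\sqrt{k+1}$ contributes leading coefficient $1/\sqrt{(k+1)!}$, while $h_{k-1}$ has lower degree and cannot cancel it.

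The affine substitution $z=(\lambda-\mu)/s$ with $s>0$ is an invertible change of variable that preserves degree. Hence $h_k((\lambda-\mu)/s)$ has degree exactly $k$ in $\lambda$, with leading coefficient $s^{-k}/\sqrt{k!}\neq0$. The $K+1$ polynomials therefore have distinct exact degrees $0,\ldots,K$. The matrix expressing them in the monomial basis $1,\lambda,\ldots,\lambda^K$ is triangular with nonzero diagonal, so it is invertible. Since $\dim\mathcal P_K=K+1$, the family is a basis. As a remark, orthonormality under $\mathcal N(0,1)$ would give linear independence too, but the degree argument is more elementary.

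For locality, I will note that $h_k(\mathbf S)$ is a polynomial of degree $k$ in $\mathbf L$, since $\mathbf S$ is affine in $\mathbf L$. Entry $(\mathbf L^j)_{uv}$ can be nonzero only if some walk of length $j$ joins $u$ and $v$ in the support graph of $\mathbf L$. That support graph is $G$ plus self-loops, so $(\mathbf L^j)_{uv}\neq0$ forces $\operatorname{dist}(u,v)\le j$. Consequently every row $u$ of $\mathbf Z$ in \eqref{eq:filter} depends only on rows of $\mathbf H$ within $K$ hops of $u$. This holds for $K$-hop locality of the propagation stage, since $f_{\boldsymbol\phi}$ is nodewise.

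For cost, I will read off Algorithm~\ref{alg:filter} directly. Each iteration performs one sparse product $\mathbf L\mathbf B_k$ with $\mathbf B_k\in\mathbb R^{n\times C}$, costing $\mathcal O(\operatorname{nnz}(\mathbf L)C)$. It also does a constant number of dense axpy-type updates and one diagonal scaling, each costing $\mathcal O(nC)$. Summing gives $\mathcal O(mC)$ per step, and $K$ steps give $\mathcal O(KmC)$ for $K\ge1$; the $+n$ in $m$ absorbs the dense terms, including the $\mu\mathbf B_k$ shift. For memory at inference, only $\mathbf B_{k-1}$, $\mathbf B_k$, the new $\mathbf B_{k+1}$ and the accumulator $\mathbf Z$ are kept, which is $\mathcal O(nC)$ beyond storing $\mathbf L$ and $\mathbf H$. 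The main obstacle is not mathematical. It is stating precisely what memory counts as \emph{additional}: the graph and the input $\mathbf H$ are excluded, and the argument must note that earlier $\mathbf B_j$ are discarded, whereas training with backpropagation would store all of them.
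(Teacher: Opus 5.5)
Your proposal is correct and follows essentially the same route as the paper's proof. Both use the nonzero leading coefficient $s^{-k}/\sqrt{k!}$ to get a triangular change of basis to monomials, locality from $(\mathbf L^k)_{ij}\neq0\Rightarrow\operatorname{dist}(i,j)\leq k$, and a per-step count of one sparse product plus $\mathcal O(nC)$ dense work with only consecutive responses and the accumulator retained; you add details the paper leaves implicit, namely the induction for the leading coefficient and the walk argument for locality.
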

Appendix~\ref{proof:complete} proves the statement.
Standard reverse-mode training may retain $\mathcal O(KnC)$ activations. The cost excludes the nodewise predictor and optional calibration. The same asymptotic propagation order applies to several competing recurrences, so this proposition does not establish a runtime advantage.

\section{Properties, Regularization and Calibration}
\label{sec:theory}
\label{sec:enhancements}
All derivations appear in Appendix~\ref{app:proofs}. The results distinguish the coordinates used to fit a filter from its function class and regularization prior.
\subsection{Gaussian alignment and finite-step optimization}
For a fixed scalar signal $\mathbf r$, normalized by $\|\mathbf r\|_2^2=n$, let
\begin{equation}
\mathbf F=[h_0(\mathbf S)\mathbf r,\ldots,h_K(\mathbf S)\mathbf r],
\quad \mathbf G=\frac{\mathbf F^\top\mathbf F}{n},
\quad G_{jk}=\int h_j(z)h_k(z)\,d\nu_{\mathbf r}^{\mu,s}(z).
\label{eq:main_gram}
\end{equation}
If the polynomial inner products agree with a standard Gaussian through degree $2K$, then $\mathbf G=\mathbf I_{K+1}$. More generally, $\epsilon=\|\mathbf G-\mathbf I\|_2<1$ gives
\begin{equation}
1-\epsilon\leq\lambda_{\min}(\mathbf G)\leq\lambda_{\max}(\mathbf G)\leq1+\epsilon,
\qquad \kappa_2(\mathbf G)\leq\frac{1+\epsilon}{1-\epsilon}.
\label{eq:gram_condition}
\end{equation}
A finite graph need only approximate these finitely many inner products. Matching the first two moments or the fourth moment alone is insufficient.

A complete basis transformation $\boldsymbol\theta=\mathbf T\boldsymbol\beta$ preserves predictions. Ordinary gradient descent on $J(\mathbf T\boldsymbol\beta)$ induces the update
\begin{equation}
\boldsymbol\theta_{t+1}=\boldsymbol\theta_t-\alpha\mathbf T\mathbf T^\top\nabla J(\boldsymbol\theta_t).
\label{eq:main_coordinate_gd}
\end{equation}
The basis determines a positive-definite preconditioner for this fixed-transform gradient update. Equivalent exact solutions can therefore coexist with different finite-step predictions. This identity does not cover general adaptive optimizers or guarantee neural-model performance.

\subsection{Labeled nodes and changes in the predictor}
Full-node alignment need not survive label sampling or predictor updates. For the normalized signal in \eqref{eq:main_gram}, fix $q$ candidate coordinate pairs independently of the label mask. Let $0<\pi\leq1$ and $0<\delta<1$ be the sampling and failure probabilities. Write $d_K=K+1$, let $\mathbf f_{ji}^\top$ be row $i$ of $\mathbf F_j$, and define
\begin{equation}
\begin{gathered}
 B_j=\max_i\|\mathbf f_{ji}\|_2^2,\quad \epsilon_j=\|\mathbf G_j-\mathbf I_{d_K}\|_2,
 \quad t_\delta=\log\frac{2qd_K}{\delta},\quad 
 \eta_j=\sqrt{\frac{2B_j\|\mathbf G_j\|_2t_\delta}{\pi n}}+
 \frac{2B_jt_\delta}{3\pi n}.
\end{gathered}
\label{eq:main_mask_eta}
\end{equation}
\begin{theorem}[Label sampling and feature changes]
\label{thm:mask}
Condition on $\mathbf L$, $\mathbf r$ and the candidate pairs. Let $\mathbf M$ have independent diagonal entries $M_{ii}\sim\operatorname{Bernoulli}(\pi)$, $0<\pi\leq1$. For $0<\delta<1$, with probability at least $1-\delta$, simultaneously for every candidate $j$,
\begin{equation}
\left\|\frac{\mathbf F_j^\top\mathbf M\mathbf F_j}{\pi n}-\mathbf G_j\right\|_2\leq\eta_j.
\label{eq:main_mask}
\end{equation}
On this event, let $\widetilde{\mathbf F}_j=\mathbf F_j+\mathbf E_j$, allowing $\mathbf E_j$ to depend on $\mathbf M$, and put $\zeta_j=\|\mathbf M\mathbf E_j\|_2/\sqrt{\pi n}$. If $\epsilon_j+\eta_j<1$, then
\begin{equation}
\begin{gathered}
 a_j\mathbf I_{d_K}\preceq\frac{\widetilde{\mathbf F}_j^\top\mathbf M\widetilde{\mathbf F}_j}{\pi n}\preceq b_j\mathbf I_{d_K},\\
 a_j=\bigl(\sqrt{1-\epsilon_j-\eta_j}-\zeta_j\bigr)_+^2,\qquad
 b_j=\bigl(\sqrt{1+\epsilon_j+\eta_j}+\zeta_j\bigr)^2.
\end{gathered}
\label{eq:main_drift}
\end{equation}
Here $(x)_+=\max\{x,0\}$ and $\preceq$ is the positive-semidefinite order.
\end{theorem}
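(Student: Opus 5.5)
The first claim \eqref{eq:main_mask} follows from the matrix Bernstein inequality applied to a sum of independent, centered, self-adjoint random matrices. Fix a candidate $j$ and write
\[
\frac{\mathbf F_j^\top\mathbf M\mathbf F_j}{\pi n}-\mathbf G_j=\sum_{i=1}^n \mathbf Y_i,
\qquad
\mathbf Y_i=\frac{M_{ii}-\pi}{\pi n}\,\mathbf f_{ji}\mathbf f_{ji}^\top .
\]
This identity uses $\mathbf G_j=\mathbf F_j^\top\mathbf F_j/n=\sum_i\mathbf f_{ji}\mathbf f_{ji}^\top/n$, which holds because $\|\mathbf r\|_2^2=n$ and the Gram of \eqref{eq:main_gram} is defined with the same normalization. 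The summands are independent because the $M_{ii}$ are. Each has mean zero, since $\mathbb E M_{ii}=\pi$.

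For the uniform bound, $|M_{ii}-\pi|\leq\max\{\pi,1-\pi\}\leq1$, so $\|\mathbf Y_i\|_2\leq B_j/(\pi n)$ almost surely. For the variance,
\[
\sum_i\mathbb E\mathbf Y_i^2=\sum_i\frac{\pi(1-\pi)}{\pi^2n^2}\|\mathbf f_{ji}\|_2^2\,\mathbf f_{ji}\mathbf f_{ji}^\top
\preceq\frac{B_j}{\pi n}\cdot\frac{1}{n}\sum_i\mathbf f_{ji}\mathbf f_{ji}^\top
=\frac{B_j}{\pi n}\mathbf G_j .
\]
The variance parameter is therefore at most $\sigma^2=B_j\|\mathbf G_j\|_2/(\pi n)$.

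Matrix Bernstein (Tropp) in dimension $d_K$ gives $\Pr(\|\sum\mathbf Y_i\|\geq u)\leq 2d_K\exp\bigl(-\tfrac{u^2/2}{\sigma^2+Ru/3}\bigr)$ with $R=B_j/(\pi n)$. I would use its inverted form: with probability at least $1-\delta'$, the deviation is at most $\sqrt{2\sigma^2\log(2d_K/\delta')}+\tfrac{2R}{3}\log(2d_K/\delta')$. This follows by solving the quadratic in $u$ and using $\sqrt{a+b}\leq\sqrt a+\sqrt b$. Setting $\delta'=\delta/q$ gives exactly $\eta_j$ with $t_\delta=\log(2qd_K/\delta)$. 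A union bound over the $q$ candidates, which are fixed independently of $\mathbf M$, yields the simultaneous event with probability at least $1-\delta$. The main point needing care is matching the constants to the stated $\eta_j$, in particular the factor $2/3$ and the use of $2d_K$. I would check this against the standard inverted Bernstein tail rather than a looser variant.

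For \eqref{eq:main_drift}, work on this event. Since $\mathbf M$ is a $0/1$ diagonal matrix, $\mathbf M=\mathbf M^2$. Set $\mathbf A=\mathbf M\mathbf F_j/\sqrt{\pi n}$ and $\mathbf D=\mathbf M\mathbf E_j/\sqrt{\pi n}$. Then $\widetilde{\mathbf F}_j^\top\mathbf M\widetilde{\mathbf F}_j/(\pi n)=(\mathbf A+\mathbf D)^\top(\mathbf A+\mathbf D)$.

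By the triangle inequality, $\|\mathbf A^\top\mathbf A-\mathbf I\|_2\leq\epsilon_j+\eta_j<1$. Hence the singular values of $\mathbf A$ lie in $[\sqrt{1-\epsilon_j-\eta_j},\sqrt{1+\epsilon_j+\eta_j}]$. Weyl's inequality for singular values gives $|\sigma_k(\mathbf A+\mathbf D)-\sigma_k(\mathbf A)|\leq\|\mathbf D\|_2=\zeta_j$. Equivalently, for any unit $\mathbf x$, $\|\mathbf A\mathbf x\|-\zeta_j\leq\|(\mathbf A+\mathbf D)\mathbf x\|\leq\|\mathbf A\mathbf x\|+\zeta_j$. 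Squaring these bounds, and taking the positive part on the lower side because norms are nonnegative, gives $a_j$ and $b_j$ as the extreme eigenvalue bounds.

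Because $\mathbf E_j$ may depend on $\mathbf M$, this second step is purely deterministic on the event. That is why the theorem states it conditionally rather than re-randomizing.
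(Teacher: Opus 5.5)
Your proposal is correct and follows the paper's proof essentially step for step. Both use the centered summands $\frac{M_{ii}-\pi}{\pi n}\mathbf f_{ji}\mathbf f_{ji}^\top$ with the same almost-sure bound $B_j/(\pi n)$ and variance bound $\frac{B_j}{\pi n}\mathbf G_j$, then Tropp's matrix Bernstein inequality with a union bound over the $q$ candidates (your quadratic-root inversion confirms that the paper's choice $u=\sqrt{2v_Bt_\delta}+2R_Bt_\delta/3$ suffices, so the constants match), and finally the deterministic triangle and reverse-triangle bounds for $(\mathbf M\mathbf F_j+\mathbf M\mathbf E_j)/\sqrt{\pi n}$ using $\mathbf M^2=\mathbf M$.
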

When $a_j>0$, fixed-feature quadratic gradient descent with Hessian normalized by $\pi n$ and step $1/b_j$ contracts coefficient error by at most $1-a_j/b_j$. Appendix~\ref{proof:mask} proves the theorem and this consequence.

A predictor fitted with the same labels is not independent of the mask. The probability statement requires a label-independent reference or a new fitting mask. The drift bound remains deterministic once initial singular-value bounds hold. Low row concentration and small feature changes thus matter alongside Gaussian alignment, without guaranteeing unrestricted end-to-end performance.

\subsection{Limits imposed by degree, scale and representation}
A graph spectrum is bounded, whereas the Gaussian reference is not. If $\nu$ is supported on $[-r,r]$ and its Hermite Gram through degree $K$ satisfies $\|\mathbf G-\mathbf I\|_2\leq\epsilon<1$, then
\begin{equation}
 r^2\geq(2K-1)\frac{1-\epsilon}{1+\epsilon},\qquad K\geq1.
\label{eq:main_support}
\end{equation}
Appendix~\ref{proof:support} establishes this necessary condition. In particular, $r\leq1$ and $\epsilon\leq0.2$ exclude $K\geq2$. Reducing $s$ expands the transformed interval, but may increase polynomial magnitudes and sensitivity.

For a fixed input and $N>0$ evaluation nodes, let $\mathcal U_K$ be the span of its degree-$K$ polynomial responses and $\boldsymbol\Pi_K$ the orthogonal projector onto that span. No full-rank assumption is required.
\begin{proposition}[Representation and fitting error]
\label{prop:representation}
For a noiseless target $\mathbf f\in\mathbb R^N$ and any fitted prediction $\widehat{\mathbf f}\in\mathcal U_K$,
\begin{equation}
\frac{\|\widehat{\mathbf f}-\mathbf f\|_2^2}{N}
=\underbrace{\frac{\|(\mathbf I-\boldsymbol\Pi_K)\mathbf f\|_2^2}{N}}_{\text{representation floor }B_K}
+\underbrace{\frac{\|\widehat{\mathbf f}-\boldsymbol\Pi_K\mathbf f\|_2^2}{N}}_{\text{within-space error }E_K}.
\label{eq:main_representation}
\end{equation}
An invertible complete-basis change preserves $\mathcal U_K$ and $B_K$. At fixed graph, input, target and evaluation nodes, $B_{K+1}\leq B_K$.
\end{proposition}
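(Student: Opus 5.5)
The plan is to treat \eqref{eq:main_representation} as a Pythagorean identity in $\mathbb R^N$, and to make the setup precise before computing anything. Fix the graph, the input signal (a column $\mathbf h$ or, more generally, the fixed predictor output), and the evaluation nodes, encoded by a selection matrix $\mathbf P\in\mathbb R^{N\times n}$. Then
\[
\mathcal U_K=\{\mathbf P\,g(\mathbf L)\mathbf h : g\in\mathcal P_K\},
\]
which is a linear subspace of $\mathbb R^N$, since $g\mapsto \mathbf P g(\mathbf L)\mathbf h$ is linear on $\mathcal P_K$. The orthogonal projector $\boldsymbol\Pi_K$ onto this subspace exists whatever its dimension, so no full-rank assumption enters.

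For the decomposition itself, write
\[
\widehat{\mathbf f}-\mathbf f=(\widehat{\mathbf f}-\boldsymbol\Pi_K\mathbf f)-(\mathbf I-\boldsymbol\Pi_K)\mathbf f .
\]
The first term lies in $\mathcal U_K$, because both $\widehat{\mathbf f}$ and $\boldsymbol\Pi_K\mathbf f$ do. The second term lies in $\mathcal U_K^\perp$ by the defining property of an orthogonal projector. Their cross inner product therefore vanishes, and expanding $\|\cdot\|_2^2$ and dividing by $N$ gives $B_K+E_K$.

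For basis invariance, an invertible complete-basis change sends a coefficient vector $\boldsymbol\beta$ to $\boldsymbol\theta=\mathbf T\boldsymbol\beta$ with $\mathbf T$ invertible. By Proposition~\ref{prop:complete}, both families span $\mathcal P_K$. The set of responses $\{\mathbf P\sum_k\theta_k p_k(\mathbf L)\mathbf h\}$ is thus the image of the same polynomial set $\mathcal P_K$, so $\mathcal U_K$ is unchanged. Since $\boldsymbol\Pi_K$ and $B_K$ depend only on $\mathcal U_K$ and $\mathbf f$, they are unchanged as well. In the multichannel case I would argue columnwise, or apply the same argument to the vectorized span.

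For monotonicity, $\mathcal P_K\subseteq\mathcal P_{K+1}$ gives $\mathcal U_K\subseteq\mathcal U_{K+1}$. The quantity $\|(\mathbf I-\boldsymbol\Pi)\mathbf f\|_2^2$ equals $\min_{\mathbf u\in\mathcal U}\|\mathbf f-\mathbf u\|_2^2$, and a minimum over a larger set can only be smaller, so $B_{K+1}\le B_K$. The only real care point is stating the setup so that $\mathcal U_K$ is a genuine subspace determined by the function class alone. Once that is done, each claim is a one-line consequence, and I do not expect a genuine obstacle.
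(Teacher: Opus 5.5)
Your proposal is correct and follows the paper's proof essentially step for step. It uses the Pythagorean split of $\widehat{\mathbf f}-\mathbf f$ into a component in $\mathcal U_K$ and one in $\mathcal U_K^\perp$, invariance of the response span (hence of $\boldsymbol\Pi_K$ and $B_K$) under an invertible coefficient transform, and $B_{K+1}\leq B_K$ from the nesting $\mathcal U_K\subseteq\mathcal U_{K+1}$ together with the minimum-distance characterization of the projection. Your explicit selection-matrix setup only makes precise what the paper leaves implicit, including why no full-rank assumption is needed.
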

Appendix~\ref{proof:representation} gives the proof. Changing a learned predictor can change the space, whereas conditioning alone cannot remove a fixed-input representation floor.

\subsection{Gaussian derivative and curvature penalties}
For $g(\lambda)=\sum_k\theta_kh_k((\lambda-\mu)/s)$ and $\Lambda\sim\mathcal N(\mu,s^2)$, define for integer $q\geq0$
\begin{equation}
\mathcal D_q(g)=\mathbb E[(g^{(q)}(\Lambda))^2]
=\frac{1}{s^{2q}}\sum_{k=q}^{K}\frac{k!}{(k-q)!}\theta_k^2.
\label{eq:main_derivative}
\end{equation}
Empty sums are zero. Appendix~\ref{proof:derivative} proves the identity. Thus $q=0$ is response energy, $q=1$ is derivative energy, and $q=2$ is curvature energy. The regularized objective is
\begin{equation}
 \mathcal J=\mathcal L_{\mathcal T}+\lambda_\phi\|\boldsymbol\phi\|_2^2
 +\sum_{c=1}^{C}\sum_{q=0}^{2}\lambda_q\mathcal D_q(g_c),
 \qquad \lambda_q\geq0.
\label{eq:main_objective}
\end{equation}
These are preferences over functions. If $\boldsymbol\theta=\mathbf T_b\boldsymbol\beta_b$ represents the same filter in basis $b$ and $\mathbf W$ is the Hermite penalty matrix, then
\begin{equation}
 \boldsymbol\theta^\top\mathbf W\boldsymbol\theta
 =\boldsymbol\beta_b^\top\underbrace{\mathbf T_b^\top\mathbf W\mathbf T_b}_{\mathbf W_b}\boldsymbol\beta_b.
\label{eq:main_penalty_transform}
\end{equation}
All enhanced comparisons use the same functional penalty through this transformation. When the prior has fixed coordinates different from the propagation coordinates, $\mathbf T_b$ also converts to those prior coordinates. In particular, the nonlinear experiment uses the same $\mathcal N(1,1/8)$ curvature prior at every candidate propagation scale.

\subsection{Regularization risk and sensitivity}
For a fixed design $\mathbf F$, let $\mathbf M$ select $m_{\mathcal T}>0$ labeled nodes and define $\mathbf Q=\mathbf F^\top\mathbf M\mathbf F/m_{\mathcal T}$, $\mathbf G=\mathbf F^\top\mathbf F/n$ and
\begin{equation}
\widehat{\boldsymbol\theta}=\argmin_{\boldsymbol\theta}
\left\{\frac{\|\mathbf M(\mathbf F\boldsymbol\theta-\mathbf y)\|_2^2}{2m_{\mathcal T}}
+\boldsymbol\theta^\top\mathbf W\boldsymbol\theta\right\},\qquad
\mathbf A=\mathbf Q+2\mathbf W\succ0.
\label{eq:fixed_estimator}
\end{equation}
The matrix $\mathbf W\succeq0$ represents the functional penalty. Assume $\mathbf y=\mathbf F\boldsymbol\theta^\star+\boldsymbol\xi$, with conditional mean-zero noise and covariance $\sigma^2\mathbf I_n$. The design, mask, coordinates and penalty are fixed independently of this noise.
\begin{proposition}[Bias--variance decomposition]
\label{prop:risk}
The expected noiseless prediction error is
\begin{equation}
\begin{aligned}
\mathcal R(\mathbf W)&=\mathbb E\!\left[\frac{\|\mathbf F(\widehat{\boldsymbol\theta}-\boldsymbol\theta^\star)\|_2^2}{n}\right]
=\|\mathbf G^{1/2}\mathbf A^{-1}2\mathbf W\boldsymbol\theta^\star\|_2^2
+\frac{\sigma^2}{m_{\mathcal T}}\operatorname{tr}(\mathbf G\mathbf A^{-1}\mathbf Q\mathbf A^{-1}).
\end{aligned}
\label{eq:main_risk}
\end{equation}
If $\mathbf G=\mathbf Q=\mathbf I$ and $\mathbf W$ is diagonal, put $u_k=2W_{kk}$ and $v=\sigma^2/m_{\mathcal T}$. Then
\begin{equation}
\mathcal R(\mathbf W)-\mathcal R(\mathbf0)
=\sum_{k=0}^K\frac{u_k^2(\theta_k^\star)^2-v(2u_k+u_k^2)}{(1+u_k)^2}.
\label{eq:main_shrinkage}
\end{equation}
\end{proposition}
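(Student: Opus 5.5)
The plan is to write $\widehat{\boldsymbol\theta}$ in closed form and split its error into a deterministic shrinkage part and a zero-mean noise part. Setting the gradient of \eqref{eq:fixed_estimator} to zero gives
\[
\mathbf Q\boldsymbol\theta-\frac{\mathbf F^\top\mathbf M\mathbf y}{m_{\mathcal T}}+2\mathbf W\boldsymbol\theta=\mathbf0 .
\]
Because $\mathbf A\succ0$, this yields $\widehat{\boldsymbol\theta}=\mathbf A^{-1}\mathbf F^\top\mathbf M\mathbf y/m_{\mathcal T}$. Note that $\mathbf M$ is a $0/1$ diagonal selector, so $\mathbf M^\top\mathbf M=\mathbf M$. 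This identity is what makes $\mathbf Q$ appear both in the normal equations and in the noise covariance below.

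Next, substitute $\mathbf y=\mathbf F\boldsymbol\theta^\star+\boldsymbol\xi$ to get
\[
\widehat{\boldsymbol\theta}-\boldsymbol\theta^\star=\mathbf A^{-1}\mathbf Q\boldsymbol\theta^\star-\boldsymbol\theta^\star+\mathbf A^{-1}\mathbf F^\top\mathbf M\boldsymbol\xi/m_{\mathcal T}.
\]
Since $\mathbf Q=\mathbf A-2\mathbf W$, the first two terms simplify to $-\mathbf A^{-1}2\mathbf W\boldsymbol\theta^\star$. For any vector $\mathbf v$ we have $\|\mathbf F\mathbf v\|_2^2/n=\mathbf v^\top\mathbf G\mathbf v=\|\mathbf G^{1/2}\mathbf v\|_2^2$. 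Expanding the square, the cross term vanishes because the noise has conditional mean zero and the design, mask and penalty are fixed. The bias term is then exactly $\|\mathbf G^{1/2}\mathbf A^{-1}2\mathbf W\boldsymbol\theta^\star\|_2^2$. The variance term is $\operatorname{tr}(\mathbf G\,\operatorname{Cov})$ with
\[
\operatorname{Cov}=\frac{\sigma^2}{m_{\mathcal T}^2}\mathbf A^{-1}\mathbf F^\top\mathbf M\mathbf M\mathbf F\mathbf A^{-1}=\frac{\sigma^2}{m_{\mathcal T}}\mathbf A^{-1}\mathbf Q\mathbf A^{-1},
\]
where the second equality uses $\mathbf M^2=\mathbf M$ and the symmetry of $\mathbf A$. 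This gives \eqref{eq:main_risk}.

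For the diagonal specialization, set $\mathbf G=\mathbf Q=\mathbf I$ and $\mathbf W$ diagonal. Then $\mathbf A=\operatorname{diag}(1+u_k)$, and the risk separates by coordinate:
\[
\mathcal R(\mathbf W)=\sum_k\left[\frac{u_k^2(\theta_k^\star)^2}{(1+u_k)^2}+\frac{v}{(1+u_k)^2}\right].
\]
At $\mathbf W=\mathbf0$ this becomes $\mathcal R(\mathbf0)=(K+1)v$. Subtracting term by term and writing $v=v(1+u_k)^2/(1+u_k)^2$, the $k$th variance difference is
\[
\frac{v\bigl(1-(1+u_k)^2\bigr)}{(1+u_k)^2}=\frac{-v(2u_k+u_k^2)}{(1+u_k)^2},
\]
which gives \eqref{eq:main_shrinkage}.

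The main point needing care is the variance term: one must use $\mathbf M^2=\mathbf M$ so that the noise covariance collapses to $\mathbf Q$ rather than to $\mathbf F^\top\mathbf M^2\mathbf F$ written separately. The remaining steps are routine algebra once the closed form for $\widehat{\boldsymbol\theta}$ is in hand.
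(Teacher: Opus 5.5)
Your proposal is correct and follows the paper's proof essentially step for step. It solves the normal equations in closed form and splits the error into the deterministic shrinkage term $-\mathbf A^{-1}2\mathbf W\boldsymbol\theta^\star$ and a zero-mean noise term whose covariance reduces to $\frac{\sigma^2}{m_{\mathcal T}}\mathbf A^{-1}\mathbf Q\mathbf A^{-1}$ via $\mathbf M^2=\mathbf M$. It then applies $\|\mathbf F\mathbf v\|_2^2/n=\mathbf v^\top\mathbf G\mathbf v$ with the vanishing cross term, and subtracts coordinatewise in the diagonal case; the paper's appendix additionally proves change-of-basis invariance of the fitted signal, but that is not part of this statement.
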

For $u_k>0$, shrinkage improves that coefficient's risk when $(\theta_k^\star)^2<v(1+2/u_k)$. Small high-order coefficients and noisy labels can therefore favor derivative shrinkage. A sharp response can incur too much bias. Appendix~\ref{proof:risk} also establishes invariance of exact fitted signals under \eqref{eq:main_penalty_transform}. A shared penalty can help prediction without conferring an exclusive statistical advantage on Hermite coordinates. Noise-dependent validation selection is outside this fixed-penalty identity.

For $K\geq1$, define $\rho_K(r)=\sup_{|z|\leq r}(\sum_{k=0}^{K-1}h_k(z)^2)^{1/2}$, $r=\max\{|\mu|,|2-\mu|\}/s$ and $M_g=\sup_{[0,2]}|g|$.
\begin{theorem}[Fixed-filter perturbations]
\label{thm:stability}
For symmetric operators $\mathbf L,\widetilde{\mathbf L}$ with spectra in $[0,2]$ and signals $\mathbf h,\widetilde{\mathbf h}\in\mathbb R^n$, the same filter coefficients and coordinates give
\begin{equation}
\begin{aligned}
\|g(\widetilde{\mathbf L})\widetilde{\mathbf h}-g(\mathbf L)\mathbf h\|_2
\leq{}&\rho_K(r)\sqrt{\mathcal D_1(g)}\,
\|\widetilde{\mathbf L}-\mathbf L\|_F\|\mathbf h\|_2
+M_g\|\widetilde{\mathbf h}-\mathbf h\|_2.
\end{aligned}
\label{eq:main_stability}
\end{equation}
The operators need not have the same eigenvectors.
\end{theorem}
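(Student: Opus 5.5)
The plan is to split the difference into an operator-perturbation term and a signal-perturbation term:
\begin{equation*}
g(\widetilde{\mathbf L})\widetilde{\mathbf h}-g(\mathbf L)\mathbf h
=g(\widetilde{\mathbf L})(\widetilde{\mathbf h}-\mathbf h)+\bigl(g(\widetilde{\mathbf L})-g(\mathbf L)\bigr)\mathbf h .
\end{equation*}
The second summand of \eqref{eq:main_stability} is immediate. Since $\widetilde{\mathbf L}$ is symmetric with spectrum in $[0,2]$, we have $\|g(\widetilde{\mathbf L})\|_2=\max_i|g(\widetilde\lambda_i)|\leq M_g$. The whole argument then reduces to the operator bound
\begin{equation*}
\|g(\widetilde{\mathbf L})-g(\mathbf L)\|_2\leq\rho_K(r)\sqrt{\mathcal D_1(g)}\,\|\widetilde{\mathbf L}-\mathbf L\|_F .
\end{equation*}

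\textbf{Step 1 (divided-difference identity).} Write $\boldsymbol\Delta=\widetilde{\mathbf L}-\mathbf L$ and take eigendecompositions $\widetilde{\mathbf L}=\widetilde{\mathbf U}\widetilde{\boldsymbol\Lambda}\widetilde{\mathbf U}^\top$ and $\mathbf L=\mathbf U\boldsymbol\Lambda\mathbf U^\top$; the eigenvectors may differ. For a monomial, telescoping gives $\widetilde{\mathbf L}^k-\mathbf L^k=\sum_{a=0}^{k-1}\widetilde{\mathbf L}^{a}\boldsymbol\Delta\mathbf L^{k-1-a}$. Conjugating, we get
\begin{equation*}
\bigl[\widetilde{\mathbf U}^\top(\widetilde{\mathbf L}^k-\mathbf L^k)\mathbf U\bigr]_{ij}
=\Bigl(\sum_a\widetilde\lambda_i^{a}\lambda_j^{k-1-a}\Bigr)\bigl[\widetilde{\mathbf U}^\top\boldsymbol\Delta\mathbf U\bigr]_{ij}.
\end{equation*}
The bracketed sum is the first divided difference of $\lambda^k$ at $(\widetilde\lambda_i,\lambda_j)$; on the diagonal $\widetilde\lambda_i=\lambda_j$ it is the derivative. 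By linearity, $\widetilde{\mathbf U}^\top(g(\widetilde{\mathbf L})-g(\mathbf L))\mathbf U=\mathbf D\circ(\widetilde{\mathbf U}^\top\boldsymbol\Delta\mathbf U)$ with $D_{ij}=g[\widetilde\lambda_i,\lambda_j]$. By the mean value theorem, each $D_{ij}=g'(\xi_{ij})$ for some $\xi_{ij}\in[0,2]$. Orthogonal invariance of the Frobenius norm then gives
\begin{equation*}
\|g(\widetilde{\mathbf L})-g(\mathbf L)\|_2\leq\|g(\widetilde{\mathbf L})-g(\mathbf L)\|_F\leq\sup_{[0,2]}|g'|\,\|\boldsymbol\Delta\|_F .
\end{equation*}
I expect this to be the main obstacle, because the eigenvectors are not shared. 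The plan handles it with the bilinear telescoping identity, which needs no commutation, so only the sup of the divided differences is required.

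\textbf{Step 2 (Hermite derivative and Cauchy--Schwarz).} Next I would verify $h_k'=\sqrt{k}\,h_{k-1}$. This follows from $h_k=\mathrm{He}_k/\sqrt{k!}$ and $\mathrm{He}_k'=k\,\mathrm{He}_{k-1}$, or alternatively by induction on the recurrence \eqref{eq:hermite}. With $z=(\lambda-\mu)/s$ this gives
\begin{equation*}
g'(\lambda)=\frac1s\sum_{k=1}^K\theta_k\sqrt{k}\,h_{k-1}(z).
\end{equation*}
Cauchy--Schwarz then yields
\begin{equation*}
|g'(\lambda)|\leq\Bigl(\frac{1}{s^2}\sum_{k=1}^K k\,\theta_k^2\Bigr)^{1/2}\Bigl(\sum_{k=0}^{K-1}h_k(z)^2\Bigr)^{1/2}.
\end{equation*}
The first factor is exactly $\sqrt{\mathcal D_1(g)}$ by \eqref{eq:main_derivative} with $q=1$, since $k!/(k-1)!=k$. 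For $\lambda\in[0,2]$, we have $|z|\leq\max\{|\mu|,|2-\mu|\}/s=r$, so the second factor is at most $\rho_K(r)$.

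\textbf{Step 3 (assembly).} Combining the two summands through the triangle inequality and $\|(g(\widetilde{\mathbf L})-g(\mathbf L))\mathbf h\|_2\leq\|g(\widetilde{\mathbf L})-g(\mathbf L)\|_2\|\mathbf h\|_2$ gives \eqref{eq:main_stability}. The $K=0$ case needs only a remark: $g$ is constant, so the operator term vanishes, and the theorem is stated for $K\geq1$ where $\rho_K$ is defined.
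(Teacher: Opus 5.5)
Your proof is correct and follows the paper's argument. It uses the same split into $g(\widetilde{\mathbf L})(\widetilde{\mathbf h}-\mathbf h)$ and $(g(\widetilde{\mathbf L})-g(\mathbf L))\mathbf h$, the same Hermite-derivative and Cauchy--Schwarz bound $|g'|\leq\rho_K(r)\sqrt{\mathcal D_1(g)}$ on $[0,2]$, and the same entrywise mean-value bound in the mixed eigenbasis summed in Frobenius norm; the only cosmetic difference is that you obtain the entries $g[\widetilde\lambda_i,\lambda_j]\,[\widetilde{\mathbf U}^\top\boldsymbol\Delta\mathbf U]_{ij}$ by telescoping monomials, whereas the paper reads off $(g(\widetilde\lambda_i)-g(\lambda_j))[\mathbf V^\top\mathbf U]_{ij}$ and $(\widetilde\lambda_i-\lambda_j)[\mathbf V^\top\mathbf U]_{ij}$ directly, which also avoids defining the divided difference at ties.
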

Appendix~\ref{proof:stability} proves the bound. Retraining or recalibration requires an additional term. The graph norm here is Frobenius.

\subsection{Optional coordinate calibration}
For a nonzero input or frozen reference $\mathbf R$ and scale floor $s_{\min}>0$, moment coordinates require no eigendecomposition:
\begin{equation}
 \mu_0=\frac{\langle\mathbf R,\mathbf L\mathbf R\rangle_F}{\|\mathbf R\|_F^2},
 \qquad s_0=\max\!\left\{s_{\min},\frac{\|(\mathbf L-\mu_0\mathbf I)\mathbf R\|_F}{\|\mathbf R\|_F}\right\}.
\label{eq:main_moments}
\end{equation}
One can then choose a coordinate pair from a declared finite grid by minimizing the reference Gram discrepancy. Response normalization instead rescales basis-response columns, and empirical QR orthogonalizes them when they are independent. These procedures can benefit any basis and incur additional computation. They are absent from the primary plain-HermNet comparison. Appendix~\ref{app:methods} specifies their safeguards, initialization transforms and costs.

\section{Experiments}
\label{sec:experiment}
\subsection{Research questions and evaluation design}
\label{sec:questions}
The experiments address four questions, in the order of the proposed model, its extensions, its mechanism and its empirical scope:
\begin{enumerate}
\item[\textbf{RQ1.}] \textbf{Plain propagation.} Does simple HermNet improve downstream prediction under a specified update budget? Sections~\ref{sec:main_learned}--\ref{sec:main_fixed} compare it with five complete polynomial bases.
\item[\textbf{RQ2.}] \textbf{Gaussian regularization.} Does the same functional penalty improve HermNet and preserve its advantage over equally enhanced bases? Section~\ref{sec:main_learned} evaluates both comparisons.
\item[\textbf{RQ3.}] \textbf{Gaussian connection.} When do spectral signal energy and Hermite conditioning explain the favorable regime? Section~\ref{sec:main_gaussian} examines alignment and adverse controls.
\item[\textbf{RQ4.}] \textbf{Benchmark transfer.} How does HermNet compare on real data and against native architectures? Section~\ref{sec:main_real} evaluates this broader scope.
\end{enumerate}

\paragraph{Comparison policy.}
To isolate the basis, competing models share the predictor, degree, coordinate menu, optimizer rule, initialization in function space and validation criterion. The five controls are Chebyshev, unrestricted Bernstein, symmetric Jacobi, Legendre and signed monomial expansions. These controls differ from native published architectures such as \BernNet{} or \JacobiConv{}, which can impose additional constraints or use different parameterizations.

A common initial filter must also be represented in each basis. Classification comparisons initialize $g_c\equiv1$. Degree-graded bases use coefficient vector $(1,0,\ldots,0)$, while the standard degree-$K$ Bernstein basis uses all ones. The downstream regression comparisons initialize $g_c\equiv0$. A shared predictor initialization then makes the initial prediction identical across bases. Predictor regularization and dropout are held common when present and are reported separately from filter enhancements.

\paragraph{Evaluation and uncertainty.}
Synthetic studies pair graph, feature, noise and split draws across methods. Means and sample standard deviations describe draw variability. Positive one-sided paired-$t$ lower bounds resolve mean improvements under the stated familywise adjustment. Each independent follow-up fixes its protocol before generating new draws. The broader research program contains adaptive exploration and is not a single preregistered experiment. Appendices~\ref{app:protocol}--\ref{app:results} report protocols, complete comparisons and adverse outcomes.

\subsection{Downstream prediction with a jointly learned predictor}
\label{sec:main_learned}
The central experiment uses a weighted eight-dimensional Boolean product graph with 256 nodes and four independent Gaussian input features. With independent positive uniform edge-dimension weights $w_\ell\sim\operatorname{Uniform}(0.75,1.25)$, the spectrum is approximately Gaussian after centering at one and scaling. The nonlinear teacher and response are
\begin{equation}
 u_i=\tanh(x_{i0}+0.5x_{i1})+0.3x_{i2}x_{i3},\qquad
 g_\star(\lambda)=\sin\!\left(0.75\frac{\lambda-1}{s_w}\right),
 \quad s_w=\frac{\|\mathbf w\|_2}{\sum_\ell w_\ell}.
\label{eq:teacher_main}
\end{equation}
We normalize $\mathbf f=g_\star(\mathbf L)\mathbf u$ to unit RMS and add observation noise with SD $0.3$. The teacher is not a Hermite polynomial. There are 32 training, 64 validation and 96 test nodes. Test MSE is measured against the clean target. The learner jointly trains a $4\!\to\!16\!\to\!1$ tanh predictor and an unrestricted degree-four filter.

Every basis uses center $1$ and the same eight prescribed scales
$\{0.2,0.3,1/\sqrt8,0.4,0.5,0.6,0.8,1\}$, shared predictor initialization, zero initial filter, and common optimizer menus. Predictor Adam rates are $\{0.01,0.03,0.1\}$. Filter gradient steps are a fraction in $\{0.5,1\}$ of the inverse current regularized response-Gram largest eigenvalue. The plain arm has no calibration, normalization, dropout or filter decay. The enhanced arm adds $(\tau_2/2)\mathbb E[g''(\Lambda)^2]$ with $\Lambda\sim\mathcal N(1,1/8)$ and $\tau_2\in\{0,0.001,0.01,0.1\}$, identically transformed in all bases. Validation selects the configuration and checkpoint from 48 plain or 192 enhanced candidates per basis.

\begin{figure}[tbp]
\centering
\includegraphics[width=\linewidth]{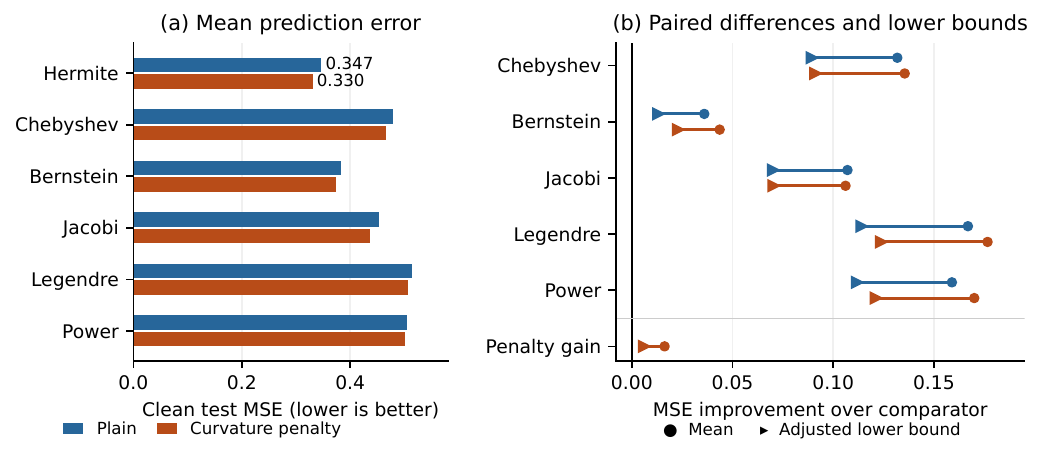}
\caption{\textbf{Independent five-update evaluation, 80 paired draws.} Left: mean clean test MSE for each basis, with a jointly learned predictor. Right: comparator-minus-Hermite mean differences (circles) and Bonferroni-adjusted one-sided lower bounds (triangles), connected for readability. The segments are not two-sided confidence intervals. The final row compares plain with enhanced Hermite. All eleven adjusted lower bounds exceed zero. Tables~\ref{tab:learned_five} and~\ref{tab:learned_five_bounds} give standard deviations and numerical bounds.}
\label{fig:main_learned}
\end{figure}

An earlier experiment fixed a primary budget of 20 updates and evaluated 40 fresh draws. Its secondary five-update results motivated the present independent experiment. Before generating 80 new draws, we fixed a five-update maximum, the sample size and all other settings. Figure~\ref{fig:main_learned} reports this independent result. Plain Hermite achieves MSE $0.34671$, compared with $0.38266$ for Bernstein, the best plain comparator: a \textbf{9.4\% reduction}. Enhanced Hermite achieves $0.33048$, compared with $0.37405$ for the best enhanced comparator: an \textbf{11.6\% reduction}. The enhancement improves Hermite itself by \textbf{4.7\%}.

All eleven planned contrasts have positive one-sided lower bounds after Bonferroni adjustment at familywise level $0.05$ within this experiment. The absolute lower bounds against Bernstein are $0.01313$ in the plain arm and $0.02297$ in the enhanced arm. For the incremental Hermite improvement, the lower bound is $0.00608$. A nonzero penalty is selected on 38 of 80 draws, and selected predictor parameters change from initialization in every Hermite fit. The paired-bootstrap sensitivity analysis agrees in sign. These results answer RQ1 and RQ2 for this declared architecture, task and update budget.

At the earlier primary 20-update endpoint, neither arm resolves superiority over every rival, although the incremental Hermite penalty benefit is resolved (adjusted lower bound $0.00702$). At 100 updates, plain Hermite loses its lead. Appendix~\ref{app:learned_product} retains that complete study alongside the independent follow-up.

\subsection{Fixed-predictor controls and the update budget}
\label{sec:main_fixed}
A separate regression experiment isolates polynomial fitting with an identity predictor on weighted ten-dimensional Boolean product graphs. Each graph has 1,024 nodes, with 80 training, 100 validation and 844 test nodes. A sine response defines the clean target, degree is four, and all filters begin at zero. The initial two-scale experiment gives Hermite MSE $0.00670$ versus $0.05966$ for Jacobi after five updates across 40 independent draws. Empirical QR gives $0.00657$, showing that Hermite is not superior to this adapted conditioning control.

To assess scale-search sensitivity, a further 40 independent draws use the common eight-scale menu $\{0.2,0.3,1/\sqrt{10},0.4,0.5,0.6,0.8,1\}$. Plain Hermite achieves $0.00511$, compared with Bernstein's $0.02165$, a \textbf{76.4\% reduction}. Allowing each basis to select response normalization gives Bernstein $0.02107$ and leaves the selected Hermite mean unchanged. All ten adjusted lower bounds, covering five rivals in each menu, remain positive. The smallest is $0.01027$.

\begin{figure}[tbp]
\centering
\includegraphics[width=\linewidth]{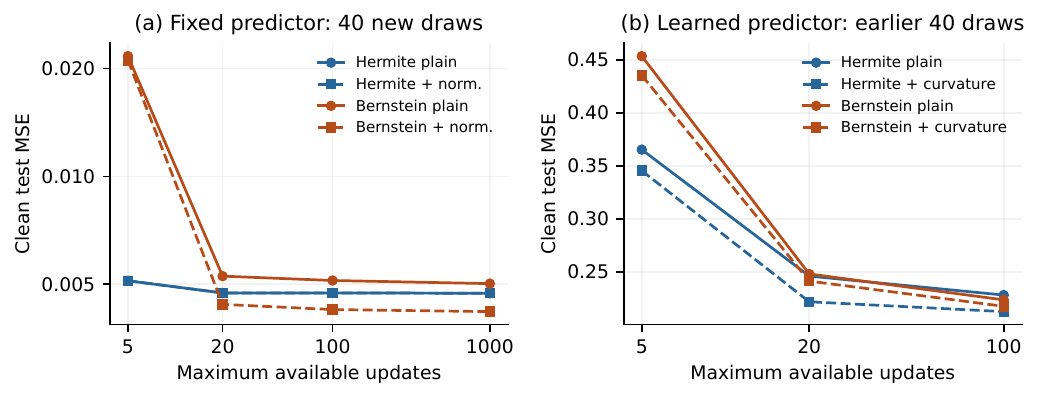}
\caption{\textbf{The advantage depends on the update budget.} Mean clean test MSE after validation selection within each available budget. Left: 40 independent fixed-predictor draws with the broad prescribed-scale menu. Solid lines use plain responses, dashed lines allow response normalization. Right: the earlier 40-draw nonlinear-predictor study, whose primary budget is 20 updates. Lines connect evaluated budgets and are not individual training trajectories. Bernstein is the closest comparator at five updates in both studies. Complete results for all bases appear in Figures~\ref{fig:broad_budgets} and~\ref{fig:learned_budgets}.}
\label{fig:main_budget}
\end{figure}

Figure~\ref{fig:main_budget} places the five-update result in context. In the broad-scale fixed-predictor experiment, the resolved lead disappears with more updates. With normalization available, Bernstein has a lower mean from 20 updates onward. Equivalent exactly solved filters agree to numerical precision in the corresponding coordinate checks. The favorable result concerns early prediction under a common optimizer rule, not an intrinsically better polynomial approximation space.

\subsection{Gaussian connection and diagnostic controls}
\label{sec:main_gaussian}
The product-graph construction gives a concrete connection to Gaussian coordinates. If $\varepsilon_\ell\in\{-1,1\}$ are independent equiprobable signs, the unweighted standardized eigenvalue distribution satisfies
\begin{equation}
 Z\ \overset{d}{=}\ \frac{\sum_\ell w_\ell\varepsilon_\ell}{\sqrt{\sum_\ell w_\ell^2}},
 \qquad \mathbb E[Z^2]=1,\qquad
 \mathbb E[Z^4]=3-2\frac{\sum_\ell w_\ell^4}{(\sum_\ell w_\ell^2)^2}.
\label{eq:product_moments}
\end{equation}
For balanced weights and increasing dimension, no coordinate dominates and the weighted sign sum approaches Gaussian behavior. The teacher's energy-weighted fourth moment in the independent nonlinear study averages $2.75$, compared with the Gaussian value $3$. This observation is consistent with the mechanism in \eqref{eq:main_gram}, although a single moment does not establish Gaussian Gram alignment through degree eight.

A dominant-coordinate intervention in the fixed-predictor study reduces the spectral fourth moment from approximately $2.77$ to $1.47$. Hermite then gives MSE $0.01467$, compared with Jacobi's $0.00694$. The intervention changes spectral scale and task difficulty as well as shape, so it does not isolate a causal effect of Gaussianity. It establishes a relevant adverse regime rather than a distributional identification result.

The separate fixed-design study provides a more direct conditioning diagnostic. Under narrow Gaussian-shaped signal energy and degree two, Hermite requires median gradient-descent counts of six on stochastic-block graphs and eight on geometric graphs, compared with five and seven for empirical whitening. Reference mismatch, uniform energy and higher degree weaken this behavior. Appendix~\ref{app:coordinate_tables} gives the complete factorial. In the representation analysis, input mismatch accounts for most error through $B_K$ in \eqref{eq:main_representation}, which no complete coordinate change can remove.

These findings answer RQ3 conditionally: Gaussian-weighted inner products explain why Hermite coordinates can be favorable, while sampling, predictor drift, degree and input representation restrict that mechanism. They do not prove that Gaussian feature noise alone creates a performance advantage.

\subsection{Real-data performance and native architectures}
\label{sec:main_real}
Table~\ref{tab:main_real} evaluates the plain fixed-coordinate model with the same width-32 ReLU predictor, classwise filter, dropout, degree menu, optimizer settings and validation rule in all six bases. There is no reference calibration, response normalization or filter decay. Two seeds select configurations and three final seeds evaluate them on one fixed split per dataset. All procedures use at most 400 updates, checking validation every five updates. Appendix~\ref{app:plain_real} specifies the complete protocol.

\begin{table}[H]
\centering
\caption{\textbf{Plain polynomial propagation on citation benchmarks.} Test accuracy (\%), mean $\pm$ sample SD over three final seeds on one fixed split. Coordinates are $(1,1)$ for all bases. Bold marks the largest mean in each column, not a significance claim.}
\label{tab:main_real}
\small
\begin{tabular}{@{}lccc@{}}
\toprule
Basis & Cora & CiteSeer & PubMed\\
\midrule
Hermite & $78.20\pm0.87$ & $69.63\pm0.99$ & $77.97\pm0.42$\\
Chebyshev & $\mathbf{81.57}\pm0.74$ & $\mathbf{71.70}\pm0.40$ & $80.03\pm0.84$\\
Bernstein & $80.83\pm0.81$ & $71.07\pm0.90$ & $78.57\pm0.29$\\
Jacobi & $79.37\pm2.15$ & $71.50\pm0.46$ & $80.27\pm0.71$\\
Legendre & $81.17\pm0.76$ & $71.13\pm0.99$ & $\mathbf{80.60}\pm0.70$\\
Power & $80.83\pm1.22$ & $70.93\pm0.81$ & $79.70\pm0.87$\\
\bottomrule
\end{tabular}
\end{table}

Plain Hermite trails the best means by $3.37$, $2.07$ and $2.63$ percentage points. The results show usable benchmark performance with consistent deficits. Three seeds on a single split do not establish noninferiority. A separate calibrated native-model comparison gives HermNet accuracies $83.80$, $71.30$ and $74.83$, while JacobiConv gives $83.90$, $72.13$ and $76.53$. That experiment uses a different protocol and an enhanced model. Its table must not be interpreted as evidence for plain Algorithm~\ref{alg:filter}.

Transfer to other architectures is mixed. Enhanced Hermite leads the common-basis ring-classification comparison, but native JacobiConv reaches $80.59\%$ versus Hermite's $73.90\%$. A subsequent shared-dropout comparison removes the Hermite lead. Channel-specific filters help several bases, and expanded real-data comparisons include larger Hermite deficits. Appendix~\ref{app:classification} reports these studies together with native-model settings and unsuccessful variants. Thus RQ4 is answered by a bounded empirical comparison, not a claim of state-of-the-art benchmark performance.

\subsection{Scope of the empirical findings}
\label{sec:main_scope}
The independent nonlinear experiment resolves both a five-update plain-Hermite advantage and its incremental curvature benefit. Fixed-predictor derivative and curvature studies do not resolve the incremental effect. Odd-hop restrictions benefit several families, while a bounded Gaussian prior meets none of eight prespecified practical-improvement criteria.

The favorable learned-predictor study lacks a full native-model panel and a wall-clock comparison. In the measured CPU pipeline, QR/Hermite total-time ratios are $0.9311$--$0.9964$. Thus fewer updates do not establish lower total fitting cost. Appendix~\ref{app:results} reports the precision, sparse-term and timing controls, including adverse outcomes.

\label{main:last}
\clearpage
\phantomsection
\label{references:start}
\bibliographystyle{plainnat}
{\small\raggedright\setlength{\bibsep}{1.5pt}\bibliography{references}}
\clearpage
\appendix
\makeatletter
\setlength{\@fptop}{0pt}
\setlength{\@fpsep}{8pt}
\setlength{\@fpbot}{0pt plus 1fil}
\makeatother
\pretocmd{\subsection}{\Needspace{6\baselineskip}}{}{}
\pretocmd{\subsubsection}{\Needspace{6\baselineskip}}{}{}
\section{Implementation Details}
\label{app:methods}
\subsection{Coordinate calibration and response transforms}
The optional trained-reference variant fits a feature-only predictor on training labels and selects its checkpoint on validation labels. Its frozen, nonzero output is $\mathbf R=f_{\boldsymbol\phi_0}(\mathbf X)$. Final predictor parameters are initialized independently. A zero or nonfinite reference records a failed calibrated fit. Input-only calibration instead uses $\mathbf R=\mathbf X$ and requires no reference training.

The moment coordinates in \eqref{eq:main_moments} use the Frobenius inner product $\langle A,B\rangle_F=\operatorname{tr}(A^\top B)$ and a positive scale floor $s_{\min}$.
For each candidate, compute $\mathbf Q_k=h_k((\mathbf L-\mu\mathbf I)/s)\mathbf R$ and
\begin{equation}
 [\mathbf G_K]_{jk}=\frac{\langle\mathbf Q_j,\mathbf Q_k\rangle_F}{\|\mathbf R\|_F^2},
 \quad\varepsilon_K=\|\mathbf G_K-\mathbf I\|_2,
 \quad(\widehat\mu_K,\widehat s_K)\in\argmin_{(\mu,s)\in\mathcal C}\varepsilon_K.
\label{eq:calibration}
\end{equation}
Here $\mathcal C$ contains admissible pairs $(\mu_0+a s_0,b s_0)$ from declared finite grids and $(1,1)$. The constraints are $s\geq s_{\min}$ and $\max\{|\mu|,|2-\mu|\}/s\leq r_{\max}$, with $s_{\min}\leq1$ and $r_{\max}\geq1$. Ties follow fixed candidate order. Coordinates are frozen during final training. The guarded moment variant uses $(\mu_0,s_0)$ if admissible and $(1,1)$ otherwise. Input-moment experiments omit the radius rejection where explicitly stated.

Response normalization replaces $\mathbf Q_k$ by $\mathbf Q_k/\max\{a_k,\epsilon\}$, where $a_k=\|\mathbf Q_k\|_F/\|\mathbf R\|_F$. The channel study uses $\epsilon=10^{-12}$. Full QR orthogonalizes flattened order-response columns when independent. Initial filter coefficients and functional penalties are transformed consistently. Neither operation changes the complete polynomial space.

\subsection{Functional priors and architectural controls}
The objective is \eqref{eq:main_objective}. With fixed prior coordinates, the Hermite penalty matrix is
\begin{equation}
\mathbf W=\lambda_0\mathbf I+\lambda_1s^{-2}\operatorname{diag}(k)_{k=0}^K
+\lambda_2s^{-4}\operatorname{diag}(k(k-1))_{k=0}^K.
\label{eq:objective}
\end{equation}
All complete bases use \eqref{eq:main_penalty_transform}. For a probability measure $\omega$ and basis vector $\mathbf b(\lambda)$,
\begin{equation}
 \mathbf J_\omega=\int\mathbf b'(\lambda)\mathbf b'(\lambda)^\top\,d\omega(\lambda),
 \qquad D_\omega(g)=\boldsymbol\theta^\top\mathbf J_\omega\boldsymbol\theta.
\label{eq:general_prior}
\end{equation}
Let $\gamma=\mathcal N(\mu,s^2)$, $I=[0,2]$ and $a=\gamma(I)$. With $\gamma_I=\gamma(\cdot\mid I)$ and $\gamma_{I^c}=\gamma(\cdot\mid I^c)$,
\begin{equation}
 D_\gamma(g)=aD_{\gamma_I}(g)+(1-a)D_{\gamma_{I^c}}(g).
\label{eq:prior_support}
\end{equation}
Replacing the full Gaussian by a bounded measure changes the functional prior. Its derivative matrix generally becomes nondiagonal. Prior comparisons specify measure normalization and include zero strength.

The odd-hop control restricts every basis to the same odd polynomial subspace on bipartite graphs. It changes the function class, unlike a coordinate transform. The channel-filter control uses
\begin{equation}
 \mathbf Z=\sum_{k=0}^K b_k(\mathbf S)\mathbf X\mathbf A_k+\mathbf1_n\mathbf b^\top,
 \qquad \mathbf A_k\in\mathbb R^{d\times C}.
\label{eq:channels}
\end{equation}
Its factorized counterpart sets $\mathbf A_k=\mathbf W_0\operatorname{diag}(\boldsymbol\theta_{k,:})$ and retains the bias. For each output channel $c$,
\[
 [\mathbf A_0(:,c),\ldots,\mathbf A_K(:,c)]
 =\mathbf W_0(:,c)\boldsymbol\theta_{:,c}^{\top},\qquad
 \operatorname{rank}[\mathbf A_0(:,c),\ldots,\mathbf A_K(:,c)]\leq1.
\]
The restriction couples orders and input channels, rather than requiring each $\mathbf A_k$ to have rank one. Unrestricted $\mathbf A_k$ enlarge the architecture for every complete basis \citep{defferrard2016}.

\subsection{Training and computational cost}
\label{sec:training}
For each declared configuration: (i) construct any optional reference and coordinate transform, (ii) initialize the common predictor and map the initial filter into its coefficients, (iii) train using only training labels, (iv) retain checkpoints using the declared validation rule, and (v) restore the selected configuration and checkpoint before test evaluation. Classification starts from $g_c\equiv1$, whereas the product-regression studies start from $g\equiv0$. Warm-start studies are identified separately. Degree, penalties, optimizer, allocation and tie rules are fixed within each comparison. No global convergence is assumed.

Writing $m=\operatorname{nnz}(\mathbf L)+n$, propagation costs $\mathcal O(KmC)$, with $\mathcal O(nC)$ additional inference memory and up to $\mathcal O(KnC)$ retained training activations. For $q$ candidates and a $d_R$-channel reference, explicit Gram calibration costs $\mathcal O(q[Km d_R+K^2n d_R+K^3])$, in addition to reference training. Timings include these costs where applicable.

\FloatBarrier
\section{Proofs}
\label{app:proofs}
This appendix gives the proofs of the statements in Sections~\ref{sec:method}--\ref{sec:theory} and the product-graph identity in Section~\ref{sec:main_gaussian}.

\subsection{Polynomial completeness and coordinate updates}
\label{proof:complete}
\begin{proof}[Proof of Proposition~\ref{prop:complete}]
The leading coefficient of $h_k((\lambda-\mu)/s)$ is $1/(s^k\sqrt{k!})\ne0$. The monomial change-of-basis matrix is triangular with nonzero diagonal, so these functions span $\mathcal P_K$. Hence
\[
 g(\mathbf L)=\sum_{k=0}^Ka_k\mathbf L^k,\qquad
 (\mathbf L^k)_{ij}\ne0\ \Longrightarrow\ \operatorname{dist}(i,j)\leq k.
\]
This proves $K$-hop locality. Each recurrence step uses one sparse multiplication and $\mathcal O(nC)$ elementwise work. Two consecutive responses and the accumulator give the stated inference memory.
\end{proof}
\begin{proof}[Proof of \eqref{eq:main_coordinate_gd} and \eqref{eq:gram_condition}]

For a fixed invertible $\boldsymbol\theta=\mathbf T\boldsymbol\beta$, the chain rule yields
\begin{equation}
\begin{aligned}
\nabla_\beta J(\mathbf T\boldsymbol\beta)&=\mathbf T^\top\nabla_\theta J(\boldsymbol\theta),\\
\mathbf T\boldsymbol\beta_{t+1}&=\mathbf T\boldsymbol\beta_t-\alpha\mathbf T\mathbf T^\top\nabla_\theta J(\boldsymbol\theta_t).
\end{aligned}
\label{eq:coordinate_gd}
\end{equation}
This proves \eqref{eq:main_coordinate_gd} for ordinary gradient descent. It does not cover adaptive optimizers or a changing transform. The spectral theorem gives the Gram identity \eqref{eq:main_gram}. Gaussian orthonormality gives $\mathbf G=\mathbf I$ when the corresponding inner products match. The norm bound then gives \eqref{eq:gram_condition} by eigenvalue perturbation.
\end{proof}

\subsection{Proof of the Gaussian derivative identity}
\label{proof:derivative}
\begin{proof}[Proof of \eqref{eq:main_derivative} and \eqref{eq:main_penalty_transform}]
The identity $h_k'=\sqrt{k}h_{k-1}$ \citep{nistHermite} gives, for $0\leq q\leq K$,
\[
 g^{(q)}(\lambda)=s^{-q}\sum_{k=q}^K\theta_k\sqrt{\frac{k!}{(k-q)!}}
 h_{k-q}\!\left(\frac{\lambda-\mu}{s}\right).
\]
With $\Lambda\sim\mathcal N(\mu,s^2)$, orthonormality eliminates cross terms:
\begin{equation}
\begin{aligned}
\mathbb E[g^{(q)}(\Lambda)^2]
&=s^{-2q}\sum_{j,k=q}^K\theta_j\theta_k
 \sqrt{\frac{j!k!}{(j-q)!(k-q)!}}\,\delta_{jk}\\
&=s^{-2q}\sum_{k=q}^K\frac{k!}{(k-q)!}\theta_k^2.
\end{aligned}
\label{eq:energy}
\end{equation}
For $q>K$, both the derivative and the empty sum vanish. This proves \eqref{eq:main_derivative}, including energy, derivative and curvature penalties. Substitution of $\boldsymbol\theta=\mathbf T_b\boldsymbol\beta_b$ proves \eqref{eq:main_penalty_transform}. \end{proof}

\subsection{Proof of the label-sampling bound}
\label{proof:mask}
\begin{proof}[Proof of Theorem~\ref{thm:mask}]
Let $d_K=K+1$ and use the notation in the statement.

\noindent\textbf{1. Center the sampled Gram.}  Fix a candidate $j$ and suppress its index. Define the independent, mean-zero, self-adjoint matrices
\[
\mathbf X_i=\frac{M_{ii}-\pi}{\pi n}\mathbf f_i\mathbf f_i^\top,
\qquad
\sum_{i=1}^n\mathbf X_i=\frac{\mathbf F^\top\mathbf M\mathbf F}{\pi n}-\mathbf G.
\]
\noindent\textbf{2. Bound the summands and variance.}  Since $|M_{ii}-\pi|\leq1$ and $\|\mathbf f_i\|_2^2\leq B$,
\[
\|\mathbf X_i\|_2\leq\frac{B}{\pi n},\qquad
\sum_i\mathbb E[\mathbf X_i^2]
=\frac{1-\pi}{\pi n^2}\sum_i\|\mathbf f_i\|_2^2\mathbf f_i\mathbf f_i^\top
\preceq\frac{B}{\pi n}\mathbf G.
\]
\noindent\textbf{3. Apply concentration uniformly over coordinates.}  Applying the matrix Bernstein inequality of \citet[Theorem~1.4]{tropp2012} to both signs yields, for every $u>0$,
\[
\Pr\!\left(\left\|\sum_i\mathbf X_i\right\|_2\geq u\right)
\leq2d_K\exp\!\left\{-\frac{u^2/2}{B\|\mathbf G\|_2/(\pi n)+Bu/(3\pi n)}\right\}.
\]
For $v_B=B\|\mathbf G\|_2/(\pi n)$ and $R_B=B/(\pi n)$, the choice $u=\sqrt{2v_Bt_\delta}+2R_Bt_\delta/3$ satisfies $u^2\geq2t_\delta(v_B+R_Bu/3)$. The failure probability is at most $2d_Ke^{-t_\delta}=\delta/q$. A union bound proves \eqref{eq:main_mask} simultaneously for all candidates.

\noindent\textbf{4. Control a changed design.}  On this event, $\mathbf A_j=\mathbf M\mathbf F_j/\sqrt{\pi n}$ has singular values between $\sqrt{1-\epsilon_j-\eta_j}$ and $\sqrt{1+\epsilon_j+\eta_j}$ whenever $\epsilon_j+\eta_j<1$. Let $\mathbf D_j=\mathbf M\mathbf E_j/\sqrt{\pi n}$. For every unit vector $\mathbf v\in\mathbb R^{d_K}$, the triangle and reverse triangle inequalities give
\[
\begin{split}
\|(\mathbf A_j+\mathbf D_j)\mathbf v\|_2
&\leq\sqrt{1+\epsilon_j+\eta_j}+\zeta_j,\\
\|(\mathbf A_j+\mathbf D_j)\mathbf v\|_2
&\geq\bigl(\sqrt{1-\epsilon_j-\eta_j}-\zeta_j\bigr)_+.
\end{split}
\]
Squaring these inequalities and optimizing over $\mathbf v$ proves \eqref{eq:main_drift}. This step is deterministic and thus permits $\mathbf E_j$ to depend on the sampled mask. For quadratic loss with Hessian $\widetilde{\mathbf Q}_j$, gradient descent satisfies $\mathbf e_{t+1}=(\mathbf I-\widetilde{\mathbf Q}_j/b_j)\mathbf e_t$. Its spectral norm is at most $1-a_j/b_j$ when $a_j>0$, proving the stated contraction. 
\end{proof}

\subsection{Proof of the risk decomposition}
\label{proof:risk}
\begin{proof}[Proof of Proposition~\ref{prop:risk}]
Let $d_K=K+1$ and use the notation in the statement.

\noindent\textbf{1. Solve the normal equations.}  Differentiating the quadratic objective gives
\[
\widehat{\boldsymbol\theta}=\mathbf A^{-1}\mathbf F^\top\mathbf M\mathbf y/m_{\mathcal T},\qquad
\widehat{\boldsymbol\theta}-\boldsymbol\theta^\star
=-\mathbf A^{-1}2\mathbf W\boldsymbol\theta^\star
+\mathbf A^{-1}\mathbf F^\top\mathbf M\boldsymbol\xi/m_{\mathcal T}.
\]
\noindent\textbf{2. Compute the noise covariance.}  Since $\mathbf M^2=\mathbf M$, the second term has mean zero and covariance
\[
\frac{\sigma^2}{m_{\mathcal T}^2}\mathbf A^{-1}\mathbf F^\top\mathbf M\mathbf F\mathbf A^{-1}
=\frac{\sigma^2}{m_{\mathcal T}}\mathbf A^{-1}\mathbf Q\mathbf A^{-1}.
\]
\noindent\textbf{3. Evaluate prediction risk.}  The identity $\|\mathbf F\mathbf v\|_2^2/n=\mathbf v^\top\mathbf G\mathbf v$ and the vanishing cross term give \eqref{eq:main_risk}. If $\mathbf G=\mathbf Q=\mathbf I_{d_K}$, then $\mathbf A=\operatorname{diag}(1+u_0,\ldots,1+u_K)$, so
\[
\mathcal R(\mathbf W)=\sum_{k=0}^K\frac{u_k^2(\theta_k^\star)^2+v}{(1+u_k)^2},\qquad
\mathcal R(\mathbf0)=d_Kv.
\]
Subtracting proves \eqref{eq:main_shrinkage}.

\noindent\textbf{4. Transform the complete objective.}  For the change-of-basis result, let $\mathbf F_b=\mathbf F\mathbf T$, $\mathbf W_b=\mathbf T^\top\mathbf W\mathbf T$, and $\mathbf A_b=\mathbf T^\top\mathbf A\mathbf T$. Invertibility of $\mathbf T$ gives
\[
\widehat{\boldsymbol\beta}
=\mathbf A_b^{-1}\mathbf T^\top\mathbf F^\top\mathbf M\mathbf y/m_{\mathcal T}
=\mathbf T^{-1}\widehat{\boldsymbol\theta},\qquad
\mathbf F_b\widehat{\boldsymbol\beta}=\mathbf F\widehat{\boldsymbol\theta}.
\]
Positive definiteness of $\mathbf A$ ensures uniqueness. Without uniqueness, equality of objective values alone does not specify a common fitted signal away from the labeled nodes. 
\end{proof}

\subsection{Proof of the representation decomposition}
\label{proof:representation}
\begin{proof}[Proof of Proposition~\ref{prop:representation}]
\noindent\textbf{1. Separate orthogonal components.}  Since $\widehat{\mathbf f}$ and $\boldsymbol\Pi_K\mathbf f$ belong to $\mathcal U_K$, their difference is orthogonal to $\mathbf f-\boldsymbol\Pi_K\mathbf f$. Expanding
\[
\widehat{\mathbf f}-\mathbf f
=(\widehat{\mathbf f}-\boldsymbol\Pi_K\mathbf f)
-(\mathbf f-\boldsymbol\Pi_K\mathbf f)
\]
and using this orthogonality proves \eqref{eq:main_representation}. \noindent\textbf{2. Change coordinates.}  An invertible basis transformation preserves the design's column space and its orthogonal projector. \noindent\textbf{3. Increase the degree.}  Finally, $\mathcal U_K\subseteq\mathcal U_{K+1}$ because the degree-$K$ polynomial class is nested in the degree-$(K+1)$ class for the same input. Minimizing squared distance over the larger space cannot increase it, so $B_{K+1}\leq B_K$. All three statements remain valid for a rank-deficient design. 
\end{proof}

\subsection{Proof of the perturbation bound}
\label{proof:stability}
\begin{proof}[Proof of Theorem~\ref{thm:stability}]
\noindent\textbf{1. Bound the scalar derivative.}  By the Hermite derivative identity and Cauchy--Schwarz, for $z=(\lambda-\mu)/s$,
\[
|g'(\lambda)|
=\left|\sum_{k=1}^K\frac{\sqrt{k}\theta_k}{s}h_{k-1}(z)\right|
\leq\sqrt{\mathcal D_1(g)}\left(\sum_{k=0}^{K-1}h_k(z)^2\right)^{1/2}
\leq\rho_K(r)\sqrt{\mathcal D_1(g)}.
\]
\noindent\textbf{2. Express both operators in mixed eigenbases.}  Let $\widetilde{\mathbf L}=\mathbf V\operatorname{diag}(\widetilde\lambda_i)\mathbf V^\top$ and $\mathbf L=\mathbf U\operatorname{diag}(\lambda_j)\mathbf U^\top$. The entries of the mixed eigenbasis representations satisfy
\[
\begin{split}
[\mathbf V^\top(g(\widetilde{\mathbf L})-g(\mathbf L))\mathbf U]_{ij}
&=(g(\widetilde\lambda_i)-g(\lambda_j))[\mathbf V^\top\mathbf U]_{ij},\\
[\mathbf V^\top(\widetilde{\mathbf L}-\mathbf L)\mathbf U]_{ij}
&=(\widetilde\lambda_i-\lambda_j)[\mathbf V^\top\mathbf U]_{ij}.
\end{split}
\]
\noindent\textbf{3. Sum the entrywise bounds.}  The scalar mean value theorem, followed by summing squared entries, gives
\[
\|g(\widetilde{\mathbf L})-g(\mathbf L)\|_F
\leq\rho_K(r)\sqrt{\mathcal D_1(g)}\,\|\widetilde{\mathbf L}-\mathbf L\|_F.
\]
\noindent\textbf{4. Add feature perturbation.}  Finally, decompose the signal difference as
\[
(g(\widetilde{\mathbf L})-g(\mathbf L))\mathbf h
+g(\widetilde{\mathbf L})(\widetilde{\mathbf h}-\mathbf h).
\]
Applying $\|\mathbf A\mathbf h\|_2\leq\|\mathbf A\|_F\|\mathbf h\|_2$ for the first term and $\|g(\widetilde{\mathbf L})\|_2\leq M_g$ for the second proves \eqref{eq:main_stability}. 
\end{proof}

\subsection{Proof of the degree--scale constraint}
\label{proof:support}
\begin{proof}[Proof of Equation~\eqref{eq:main_support}]
\noindent\textbf{1. Use the Gram bounds.}  For $p_{\mathbf a}=\sum_{k=0}^Ka_kh_k$, the assumption gives
\[
(1-\epsilon)\|\mathbf a\|_2^2\leq\int p_{\mathbf a}^2\,d\nu\leq(1+\epsilon)\|\mathbf a\|_2^2.
\] \noindent\textbf{2. Apply the recurrence and support restriction.}  For $K\geq2$, the recurrence gives $zh_{K-1}=\sqrt K h_K+\sqrt{K-1}h_{K-2}$. For $K=1$, it gives $zh_0=h_1$. Therefore, in both cases,
\[
(1-\epsilon)(2K-1)
\leq\int z^2h_{K-1}(z)^2\,d\nu(z)
\leq r^2\int h_{K-1}(z)^2\,d\nu(z)
\leq r^2(1+\epsilon).
\]
\noindent\textbf{3. Solve for the support radius.}  Rearrangement proves \eqref{eq:main_support}. The condition is necessary but does not guarantee the existence of an aligned measure on a given graph. 
\end{proof}

\subsection{Proof of the product-graph moments}
\label{proof:product_moments}
\begin{proof}[Proof of Equation~\eqref{eq:product_moments}]
\noindent\textbf{1. Diagonalize the coordinate flips.} 
For a $q$-dimensional weighted Boolean product, let $\mathbf P_\ell$ flip coordinate $\ell$. These commuting symmetric matrices satisfy $\mathbf P_\ell^2=\mathbf I$. Walsh characters have eigenvalues $\varepsilon_\ell\in\{-1,1\}$, and
\[
 \mathbf A=\sum_{\ell=1}^qw_\ell\mathbf P_\ell,\qquad
 \mathbf D=\Bigl(\sum_\ell w_\ell\Bigr)\mathbf I,\qquad
 \lambda_{\boldsymbol\varepsilon}=1-\frac{\sum_\ell w_\ell\varepsilon_\ell}{\sum_\ell w_\ell}.
\]
\noindent\textbf{2. Standardize the uniform eigenvalue measure.} 
All $2^q$ sign vectors occur once, so uniform spectral sampling makes the signs independent and equiprobable. With $a_\ell=w_\ell/(\sum_jw_j^2)^{1/2}$, symmetry gives
\[
 Z\overset d=\sum_\ell a_\ell\varepsilon_\ell,\qquad
 \mathbb E[Z]=0,\qquad \mathbb E[Z^2]=\sum_\ell a_\ell^2=1.
\]
\noindent\textbf{3. Expand the fourth power.} 
Products containing an odd power of any sign have zero expectation. Therefore
\[
 \mathbb E[Z^4]=\sum_\ell a_\ell^4+6\sum_{\ell<j}a_\ell^2a_j^2
 =3\Bigl(\sum_\ell a_\ell^2\Bigr)^2-2\sum_\ell a_\ell^4
 =3-2\frac{\sum_\ell w_\ell^4}{(\sum_\ell w_\ell^2)^2}.
\]
This proves \eqref{eq:product_moments}. The statement concerns the unweighted spectral measure. A realized signal supplies different weights in \eqref{eq:measure}, so its moments are measured separately. \end{proof}

\FloatBarrier
\small
\section{Experimental Details}
\label{app:protocol}

The protocols distinguish fixed-design diagnostics, learned predictors and native architectures. Synthetic follow-up data are generated after the relevant settings are fixed. Real-data studies use the declared reused datasets and splits.

We select hyperparameters and checkpoints using validation labels. Synthetic prediction summaries average optimizer or reference-initialization repetitions within a graph before computing graph means and standard deviations. Conditions on the same graph are paired and are not independent replications. Real-data means and standard deviations use three optimizer seeds on one fixed split per dataset. They describe optimization variability on those graphs, not uncertainty across graph populations. No statistical significance or equivalence claim is based on these standard deviations.
\subsection{Graph, signal, and target construction}
\label{app:generators}

The stochastic-block model has three nearly equal blocks and edge probabilities $p_{\rm in}=0.06$ and $p_{\rm out}=0.01$. For $n=400$, block sizes differ by at most one. The geometric model places nodes uniformly in $[0,1]^2$, joins each node to its ten nearest neighbors, and symmetrizes the edges. Both use unit edge weights and no added self-loops. The normalized Laplacian and isolated-node convention are those of Section~\ref{sec:problem}. Prediction targets are generated from graph signals rather than assigned from block membership.

Let $\ell_1<\cdots<\ell_J$ be distinct eigenvalues after grouping values within $10^{-9}$, and let $\mathbf U_j$ be an orthonormal basis of the corresponding eigenspace. Midpoint boundaries, including 0 and 2 at the ends, give widths $\Delta_j$. For a nonnegative density shape $p$, define $w_j=p(\ell_j)\Delta_j/\sum_a p(\ell_a)\Delta_a$ and generate
\[
\mathbf r=\sqrt n\sum_{j=1}^J\sqrt{w_j}\,\mathbf U_j\mathbf v_j,
\qquad \|\mathbf v_j\|_2=1,
\]
where each $\mathbf v_j$ is sampled uniformly from its unit sphere. This construction makes the total energy of an eigenspace invariant to its chosen basis. Changing the density changes the weights while retaining paired directions. Gaussian-shaped energy uses $p(\lambda)=\exp[-(\lambda-1)^2/(2\tau^2)]$ with $\tau=0.2$ or $0.4$. Uniform energy uses $p=1$. The classification factorial also uses $p(\lambda)=\exp[-(\lambda-0.55)^2/(2(0.12)^2)]+\exp[-(\lambda-1.45)^2/(2(0.12)^2)]$. These densities prescribe signal energy on the spectrum and do not describe the distribution of graph eigenvalues.

For scalar regression, the target is $h(\mathbf L)\mathbf r$ rescaled to unit all-node RMS, with $h(\lambda)=e^{-2\lambda}$ or $\mathbb I\{\lambda\leq1\}$. Repeated eigenspaces use their shared representative eigenvalue for the cutoff in the fixed-feature studies. Gaussian observation noise has standard deviation $0.5$ and is paired across controlled conditions. The input-mismatch construction orthogonalizes an independent uniform-energy signal against $\mathbf r$, rescales it to norm $\sqrt n$, and forms $\mathbf z=(\mathbf r+\rho\mathbf u)/\sqrt{1+\rho^2}$. Calibration remains based on $\mathbf r$, whereas filtering uses $\mathbf z$. The target remains fixed, so input mismatch changes representation as well as calibration agreement.

The fixed-feature studies reserve 80 validation and 200 test nodes, leaving a 120-node training pool. The 30-label condition uses a nested subset of that pool, leaving the other 90 nodes unused for training. Training sees noisy training labels, and selection sees noisy validation labels. Noiseless targets are used only by the separate prediction evaluator and, subsequently, the empirical oracle diagnostic. The coordinate study's test targets remain unused.

\subsection{Calibration and equivalent-coordinate optimization}
\label{app:coordinate_protocol}

Calibration uses the reference moment center and scale with $s_{\min}=0.05$, offsets $\{-0.5,0,0.5\}$, scale multipliers $\{0.5,0.75,1,1.5,2\}$, and the fixed pair $(1,1)$. Candidates satisfy radius at most six. Duplicates are removed, and ties follow lexicographic coordinate order. Reference whitening uses a full-rank singular-value decomposition of the reference Chebyshev design and remains fixed after input mismatch. Fixed Jacobi coordinates use parameters $(1/2,1/2)$. A relative singular-value tolerance of $10^{-10}$ defines numerical rank. Whitening requires full rank, without truncation.

Write the filter in Chebyshev coordinates as $g(\lambda)=\sum_{k=0}^K c_kT_k(\lambda-1)$, with coefficient vector $\mathbf c$. Let $\mathbf V$ evaluate these polynomials at the $K+1$ Chebyshev nodes, so $V_{jk}=T_k(\cos[\pi(j+1/2)/(K+1)])$ for $j,k=0,\ldots,K$. For the training design $\mathbf F_{\mathcal T}$ and noisy training labels $\mathbf y_{\mathcal T}$, the common coordinate-study objective is
\[
J(\mathbf c)=\frac{\|\mathbf F_{\mathcal T}\mathbf c-\mathbf y_{\mathcal T}\|_2^2}{2m}
+\frac{10^{-4}}{2(K+1)}\|\mathbf V\mathbf c\|_2^2.
\]
Every basis transforms the design, complete penalty, and identity-filter initialization. An augmented, column-equilibrated singular-value decomposition computes the exact minimizer with scaled stationarity tolerance $10^{-9}$. Gradient descent uses the exact full Hessian's largest eigenvalue. Relative gap divides the quadratic excess objective by the larger of its initial value and $10^{-12}$. The tolerance is $10^{-6}$, the cap is 2,000 updates, and finite trajectories that do not reach the tolerance enter the summaries with a censored count of 2,001. Exact cross-coordinate predictions must agree to relative tolerance $10^{-9}$.

The study uses eight graphs per family and crosses three energy distributions, two degrees, two label counts, two mismatch levels, two targets and four bases, giving 3,072 trajectories. Each trajectory has a matched exact solution. All numerical-rank and coordinate-equivalence checks pass. A Gaussian width/degree pair is evaluated against five criteria specified before the study: a stable Hermite median condition number of at most 30, a median paired Chebyshev-to-Hermite condition ratio of at least two, Hermite optimization success of at least $90\%$ with median paired $(1+t_H)/(1+t_C)$ at most $0.75$, a median paired mismatch-to-stable Hermite condition ratio of at most five, and exact noisy-validation MSE within $5\%$ of the better degree at that width. All criteria must hold in both graph families. Conditioning summaries pool graphs and label counts. Optimization and validation summaries also pool the two targets. Validation uses one canonical exact solution, so equivalent bases do not count as independent observations.

\subsection{Neural classification and native architectures}
\label{app:classification_protocol}

The synthetic classification factorial uses $n=600$, five graphs per family, 60 training nodes, 120 validation nodes, and 420 test nodes. Three independent graph signals form class-specific teacher logits after filtering, centering, and RMS normalization. Labels are sampled from their rowwise softmax. Inputs contain the three signals and 13 independent Gaussian nuisance columns, with transductive column standardization. The two graph families, three densities, and two targets yield 12 paired conditions, not 60 independent graph populations.

A width-64, two-layer ReLU reference predictor is trained using learning rate $0.003$, dropout $0.3$, explicit squared-parameter penalty $10^{-4}$, and a 300-epoch cap with patience 100. The frozen and jointly trained modes share this fitted predictor and its identity filter at initialization. Final reference-initialization repetitions are averaged within graph. Filter training has no dropout, learning rate $0.003$, a 1,000-epoch cap, and patience 100. Hermite and Chebyshev-degree controls each tune 18 settings crossing $K\in\{2,4\}$, coefficient ridge in $\{0,10^{-4},0.01\}$, and roughness strength in $\{0,0.001,0.1\}$. Chebyshev ridge tunes 18 settings with the same degrees and ridge in $\{0,10^{-6},10^{-5},10^{-4},0.001,0.01,0.03,0.1,0.3\}$. Mean graph validation accuracy, then cross-entropy, selects each method/mode/condition, with checkpoint ties resolved in favor of the earlier epoch. Each final setting has three reference-initialization repetitions. This warm-start factorial is distinct from the independent initialization used for practical node classification.

For the citation benchmarks \citep{kipf2017}, binary edges are symmetrized and deduplicated, and raw self-loops are removed before applying the method's operator. Features are row normalized. The CiteSeer isolated-node extension is retained, and all-zero unavailable label rows are excluded from the supervised split. One fixed random split per dataset uses 20 training nodes per class, 500 validation nodes, and 1,000 test nodes. All nodes and features remain visible transductively. The graphs were also used in earlier development, so this evaluation is not independent confirmation.

The practical \HermitePoly{} variant adds input, hidden, and pre-filter-logit dropout. It uses classwise filters, coefficient Adam weight decay, and a class-averaged derivative term $\lambda_1(C s^2)^{-1}\sum_{c,k}k\theta_{kc}^2$. Thus its numerical derivative strength corresponds to $\lambda_1/C$ in the class-sum convention of \eqref{eq:main_objective}. Coupled Adam decay corresponds to half that decay coefficient in an explicit squared-parameter penalty. A separate width-64 reference MLP uses learning rate $0.01$, decay $0.0005$, input/hidden dropout $0.5$, and a 300-epoch cap with patience 60. Final predictors are independently initialized, and reference/calibration costs are separate from final-model fitting.

Each of the six methods in Table~\ref{tab:real} receives 16 fixed configurations: eight initial configurations and eight additional settings that include predictor weight decay $0$, $10^{-5}$, $10^{-4}$, or $0.001$ and broader learning-rate/dropout choices. Polynomial degree choices include 2, 4, and 8, with degree 10 retained for applicable earlier native defaults. The finite configuration list contains selected combinations from these ranges. The source \OptBasisGNN{} PubMed degree-12 preset is outside this list. Two tuning optimizer seeds select by mean validation accuracy and then mean cross-entropy. Reporting uses three independent final seeds. The common training cap is 600, the minimum training duration is 200 epochs, patience is 150, and validation is checked every five epochs. The selected checkpoint may precede the minimum training duration. Selected configurations are fixed before test evaluation.

The native models use the authors' architectures, initializers and applicable safeguards. In particular, \JacobiConv{} keeps its linear predictor and learned coefficient decomposition, and \OptBasisGNN{} keeps its normalization and small evaluation perturbation with reproducible evaluation randomness. The fixed-graph sparse aggregation backend was checked against the original values and gradients. Classification experiments use CPU float32, so their timings are not directly comparable with the authors' GPU measurements. The public source revisions are \href{https://github.com/ivam-he/ChebNetII/tree/ded6c18cbe9673234071031767d17826ad632aca}{\ChebNetII}, \href{https://github.com/GraphPKU/JacobiConv/tree/5e9e671fac63e680e0681fa9b4d8074960d2d65e}{\JacobiConv}, and \href{https://github.com/yuziGuo/FarOptBasis/tree/1e3fdac8ea03b8c98110f740cd79afea9fd4831b}{\FavardGNN/\OptBasisGNN}. Exact configuration lists, selected settings, and source identities are documented in the experimental records.

\subsection{Learned-feature regression}
\label{app:regression_protocol}

The learned regression task uses the same two graph families with $n=400$, Gaussian width-$0.4$ or uniform latent energy, and smooth or cutoff targets. Its observable features are $\operatorname{asinh}(\mathbf r)$ plus seven independent Gaussian nuisance columns, standardized over all nodes. The learner is not supplied with $\mathbf r$ or the inverse transformation. Splits contain 60 training, 80 validation, and 260 test nodes. The reference is a separately fitted width-32 ReLU MLP. Final predictors are independently initialized. The four matched bases share the degree-bounded class, identity filter, predictor, and complete objective: training MSE/2, predictor L2, and a $10^{-4}/2$ mean squared response penalty on Chebyshev nodes, with zero derivative weight in the native-comparison study.

Each method starts with 36 joint configurations. Validation on two graphs per family with two optimizer seeds retains the best eight, which are then evaluated on six additional tuning graphs. Mean noisy-validation MSE over all eight tuning graphs selects the final configuration among those eight configurations. This procedure can discard a configuration that would have performed well on the additional graphs. Evaluation uses eight new graphs per family and two new optimizer seeds per graph. Graphs are shared across the four density/target conditions in each family. We report noiseless test MSE and graph SD after averaging optimizer seeds within graph.

The matched and native lists use degrees 2/4 and method-specific learning rates and penalties. Native predictor L2 strengths are $0$, $0.001$ and $0.01$, and predictor learning rates are $0.003$, $0.01$ and $0.03$. Filter learning rates range from $0.001$ to $0.3$, extending to 1 for $\OptBasisGNN{}$. Native filter decay choices are $0$, $10^{-6}$, and $10^{-4}$. Jacobi shape and coefficient-decomposition settings, and Favard recurrence rates, retain their architecture-specific roles. The configuration lists have balanced search allocations but preserve method-specific priors and capacities. Native \JacobiConv{} retains a linear predictor, while other hidden predictors use width 32. Fits use CPU float64, no dropout or clipping, a 1,200-epoch cap, minimum duration 400, patience 300, and validation at epoch one and every five epochs thereafter. The stopping rule bounds training duration without guaranteeing convergence.

\subsection{Derivative-prior selection and error decomposition}
\label{app:prior_protocol}

For a Chebyshev response $g(\lambda)=\sum_{k=0}^Kc_kT_k(\lambda-1)$, define the raw quadratic priors
\[
\begin{aligned}
R_F(g)&=\int_{\mathbb R}g'(\lambda)^2p_{\mu,s}(\lambda)\,d\lambda,
& R_B(g)&=\int_0^2g'(\lambda)^2p_{\mu,s}(\lambda)\,d\lambda,\\
R_U(g)&=\tfrac12\int_0^2g'(\lambda)^2\,d\lambda,
& R_C(g)&=\sum_{k=1}^K k^2c_k^2,
\end{aligned}
\]
where $p_{\mu,s}$ is the calibrated Gaussian density. The bounded integral is not initially divided by its retained probability. For each matrix $\mathbf Q$ representing one of these forms, normalize by $a_Q=\operatorname{tr}(\mathbf V^{-\top}\mathbf Q\mathbf V^{-1})/K$ and put $\widetilde{\mathbf Q}=\mathbf Q/a_Q$. Dividing the bounded integral by its retained probability before this normalization would give the same normalized matrix. The fitted objective is
\[
\frac{\|\mathbf F_{\mathcal T}\mathbf c-\mathbf y_{\mathcal T}\|_2^2}{2m}
+\frac{10^{-4}\|\mathbf V\mathbf c\|_2^2}{2(K+1)}
+\frac{\eta}{2}\mathbf c^\top\widetilde{\mathbf Q}\mathbf c.
\]
Zero roughness uses $\eta=0$. Common trace normalization controls penalty scale in response coordinates, although effective degrees of freedom can still differ. The constant nullspace is retained exactly. Each complete penalty is transformed into equivalent Hermite coordinates, rather than independently normalized after the coordinate change.

For each of eight tuning graphs per family, the study crosses three densities, two targets, two training counts, two mismatch levels, and two degrees. Each context fits the shared zero-roughness objective and four priors at $\eta\in\{10^{-5},10^{-4},10^{-3},10^{-2},0.1,1\}$. Within each of the 48 family/density/target/label-count/mismatch conditions, noisy-validation MSE averaged over tuning graphs selects a strength at each degree and then the degree. Exact ties prefer smaller strength and smaller degree. The selections are fixed before generating the 12 new evaluation graphs per family. The 48 conditions use shared graphs, so they are not 48 independent experiments.

The primary block fixes Gaussian width $0.4$ and the smooth response and averages the four training-count/mismatch conditions within each graph. The stated practical target is a gain of at least $0.005$ absolute MSE and $5\%$ relative MSE against each of four comparators in each family. Forty remaining conditions yield 160 comparator-condition checks for mean harm above $0.005$. These are descriptive practical criteria, not significance tests. All required fits completed without failure. Reported prediction errors use only the independent evaluation targets. No noiseless tuning targets or test targets from the coordinate study enter these errors.

The subsequent oracle diagnostic projects the noiseless target on the degree-two or degree-four evaluation design for all saved predictions. SVD and QR projections agree, as do equivalent-basis projections. The calculation contains 576 target-specific oracle projections: two families, 12 graphs per family, three densities, two mismatch levels, two targets, and two degrees. The complete output has 5,760 fixed-degree and 2,880 selected-degree procedure outcomes. The diagnostic uses the previously evaluated test nodes and is conducted after fitting and evaluation. It leaves the fitted models and evaluated conditions unchanged. The representation floor is computed on a finite sample using the known target. The within-space term includes bias, noise sensitivity, sampling and selection.

\subsection{Expanded common-model and native comparisons}
\label{app:later_protocol}

The common-architecture comparison uses Hermite, Chebyshev, unrestricted Bernstein, Jacobi with shape $(0.5,0.5)$, Legendre and signed monomial bases. A separate empirical QR control uses the same fixed input responses when included. Native models are \ChebNet{}, \ChebNetII{}, \BernNet{}, \JacobiConv{}, \GPRGNN{}, \FavardGNN{}, \OptBasisGNN{}, \APPNP{}, \texttt{GCN} and \texttt{MLP}. Native constraints and architecture are preserved, so these models need not match the common model's parameter count. The standard deviations in this comparison are descriptive. These development evaluations do not constitute independent confirmatory tests. The highest comparator means summarize the evaluated procedures without using test labels for model selection.

For the product graphs, nodes are binary vectors of length nine and an edge flipping coordinate $j$ has weight $w_j$. Balanced weights are iid uniform on $[0.75,1.25]$. The dominant condition replaces $w_1$ by four. With $s_w=\|\mathbf w\|_2/\sum_jw_j$, write $z=(\lambda-1)/s_w$. The diffusion task draws a latent iid standard Gaussian signal $\mathbf r$, observes its componentwise inverse-hyperbolic-sine transform plus seven independent Gaussian nuisance columns, and standardizes each observed column. The clean signal is $\exp[-0.6(\mathbf L-\mathbf I_n)/s_w]\mathbf r$, normalized to unit nodewise RMS. Labels threshold this signal after independent Gaussian noise of standard deviation 0.25. Random masks contain 80 training, 100 validation and 332 evaluation nodes.

The product-graph tuning menu fully crosses degree $\{2,4\}$, learning rate $\{0.01,0.03\}$, affine versus width-32 nonlinear predictor, dropout $\{0,0.5\}$, and fixed/raw versus input-moment/RMS coordinates. QR uses its full response transform. Native menus use degrees $\{2,4\}$, rates $\{0.003,0.01,0.03,0.1\}$, the same two dropout choices and decay $\{0.0005,0.005\}$. The \texttt{GCN}/\texttt{MLP} configurations vary hidden width between 16 and 32 instead of propagation degree. Each of 32 configurations trains for 100 updates on four tuning graph/optimizer pairs. The four highest-ranked configurations continue to 500 updates. Selection uses average best validation accuracy, then cross-entropy and configuration order. Eight independent graph/optimizer pairs receive 1,000-update refits. One optimizer is used per graph in this study. Raw nonlinear/no-dropout and fixed linear/dropout common references are evaluated as separate reference procedures.

The separate CiteSeer search uses the public partition, 16 configurations per method and two tuning seeds. Configurations are reduced to four, two and one at budgets 100, 300 and 1,000, respectively, before three new-seed 1,000-update refits. Common degree-four models use optional fixed/moment coordinates, raw/RMS normalization and derivative regularization. The selected Hermite configuration uses moment/RMS coordinates, rate 0.01 and zero derivative penalty. The dataset and partition have already been used in development. Early pruning can discard configurations that improve with longer training. Equal update allocations also leave differences in computation cost and search coverage across native architectures.

WebKB features are row-normalized. Edges are symmetrized, made binary and stripped of self edges. The supplied masks contain 87/59/37 training/validation/test nodes for Cornell and Texas and 120/80/51 for Wisconsin, approximately 48/32/20 percent. These counts follow the supplied masks. Common menus cross predictor type, dropout and learning rate at degree four with input-moment/RMS normalization. QR uses a full transform. For every method and split, eight configurations train to 100 updates, two continue to 500, and the selected configuration receives two new-seed 1,000-update refits. Selection uses validation labels only. Accuracy-first and CE-first checkpoints are evaluated on the same selected configurations. Each of ten split means first averages the two optimizer seeds. Native architecture, capacity and computation remain method specific.

\subsection{Channel-filter, rounding and sparse-projection protocols}
\label{app:constrained_protocol}

The channel-filter task uses eight new balanced product graphs, each with eight iid Gaussian feature columns. Four informative columns are independently filtered by $\cos(0.4z)$, $\sin(0.4z)$, $\cos(1.4z)$ and $\sin(1.4z)$. Each response is divided by its spectral RMS and then by two, giving unit expected total signal variance over the feature draws. Noise of standard deviation 0.25 is added before thresholding. We use 80/100/332 nodes and two optimizer seeds per graph. This differs from the diffusion task above and does not use Hermite teacher coefficients.

Full and factorized models cache the same degree-four input responses, use input-moment/RMS normalization or QR, identity-filter initial predictions, zero parameter decay, Adam rate 0.03 and dropout 0.5. A node/feature dropout mask is shared across orders, preserving its objective under an invertible order-basis change. Fits run 1,000 updates with validation every five updates. All ten native reference models use the settings selected on the diffusion task. The additional 82-parameter single-layer \texttt{ChebConv} has degree four, input dropout 0.5, rate 0.03 and decay 0.0005 excluding bias. Its initialization and dropout placement differ from those of the cached common models. Means first average optimizer seeds within graph.

The precision and sparsity diagnostics use separate sets of eight independent graphs and both weight conditions. Their spectra are available exactly from the product construction. Each of six polynomial families is evaluated in moment argument $z$ and fixed argument $\lambda-1$, with columns normalized by spectral RMS. QR is constructed from bounded Chebyshev columns. These thirteen representations include the usual bounded spectral interval for fixed Chebyshev and Bernstein. Fixed Hermite is an ablation. All unrestricted degree-$K$ projected responses agree numerically before constraints are applied.

For coefficient precision, we use three four-channel target groups: the wave responses defined above, smooth responses $e^{-0.5\lambda}$, $e^{-2\lambda}$, $1-e^{-0.5\lambda}$ and $1-e^{-2\lambda}$, and indicators of $z$ in $[-1.5,-0.5)$, $[-0.5,0.5)$, $[0.5,1.5)$ and $|z|\geq1.5$. Each column has spectral RMS one before division by two. We project each target group onto degree four or eight and recover coefficients in every basis. For bit depth $b\in\{4,8\}$, a coefficient group with maximum magnitude $M>0$ uses scale $a=M/(2^{b-1}-1)$ and rounded coefficient $a\,\operatorname{round}(c/a)$, with ties rounded to even. Zero groups remain zero. The primary analysis uses one scale for the full tensor, and a sensitivity analysis uses one scale per input filter. Scales, responses and accumulation remain float64. The primary summary weights the three target groups equally within each graph.

Let $\boldsymbol\Phi$ be the normalized response matrix, $\mathbf C$ the unrounded projected coefficient matrix and $\boldsymbol\Delta$ the rounding error. Squared response error is exactly $\|\boldsymbol\Phi\boldsymbol\Delta\|_F^2=\operatorname{tr}(\boldsymbol\Delta^\top\boldsymbol\Phi^\top\boldsymbol\Phi\boldsymbol\Delta)$. Equal independent coefficient-noise variance would give the same expected error for all RMS-normalized designs with equal trace. Actual rounding also depends on coefficient range and error direction. Orthogonality alone is therefore insufficient to predict rounding error.

For sparse terms, degree is eight and all supports of size two, three or four among nine basis functions are enumerated. The same support is shared across four input channels, and coefficients are least-squares oracle projections of the known target. Wave frequencies are $(0.5,1)$, $(1.5,2)$ and $(2.5,3)$, with the middle pair and three terms primary. A Bernstein index selects a degree-eight basis function, not a degree equal to that index. Channel rotation preserves the shared-support approximation objective. The comparison measures approximation within constrained dictionaries using oracle support selection.

For either oracle diagnostic, if the target and fixed prediction applied to independent iid Gaussian channels are jointly Gaussian and independent Gaussian score noise has standard deviation 0.25, their correlation $\rho$ gives sign agreement $1/2+\arcsin(\rho)/\pi$. We compute $\rho$ from spectral inner products, including the noise variance. These values are calculated analytically, without sampling features or labels, training classifiers or measuring hardware inference. The quantity is conditional on a fixed filter and graph, not an upper bound on a transductive learner fitted to a realized feature matrix.

The CPU timing study uses the earlier diffusion-task inputs and degree-four affine/dropout procedures. It includes sparse operator construction, coordinate construction, model/optimizer setup, 100 updates, validation, saved-state restoration and inference. Imports, disk loading and configuration lookup are excluded. It crosses input versus expected-dropout calibration and response-side versus equivalent coefficient-side transforms. Coordinate construction has 31 timing repetitions per graph, and full fits have three. Medians are computed within graph and then across eight graphs per condition. The expected-dropout implementation uses the known product spectrum, so these timings do not establish generic-graph or GPU preprocessing costs.

\FloatBarrier
\section{Additional Results and Ablations}
\label{app:supplement}
Bold marks the best displayed mean in each comparable setting, including ties at the reported precision. Lower MSE, distortion and finite optimization counts are better, while higher accuracy is better. Bold values with $\dagger$ identify the principal HermNet evidence discussed in the main text, including its planned paired contrasts. These marks do not imply statistical significance. Configuration tables are not ranked, and captions specify the convention for contrasts and timing ratios.
\subsection{Downstream regression}
\label{app:learned_results}

\FloatBarrier
\subsubsection{Learned predictors: independent evaluation and earlier study}
\label{app:learned_five}
\label{app:learned_product}
\label{sec:learned_product}
Section~\ref{sec:main_learned} defines the common architecture and menus. The independent 80-draw follow-up uses seeds 95000--95079, with sample size and five-update endpoint fixed before generation. Eleven one-sided paired-$t$ bounds use Bonferroni familywise level $0.05$ within this follow-up. There is no sequential stopping. A 50,000-resample paired bootstrap uses the same tail probability as a descriptive check. Tables~\ref{tab:learned_five} and~\ref{tab:learned_five_bounds} report the means and contrasts. All adjusted lower bounds are positive. Nonzero curvature is selected in 38/80 Hermite fits, and every selected Hermite predictor changes from initialization.

The earlier study screens sine, cosine and exponential teachers $\sin(0.75z)$, $\cos(0.75z)$ and $e^{0.4z}$, training counts 32/80 and noise SDs 0.3/0.9 on four reused draws (93000--93003). A plain candidate requires a 10\% lead over its best plain rival. An enhanced candidate requires 10\% gains over both its best enhanced rival and plain Hermite. Joint selection maximizes the smallest gain, with lower-budget and lexical ties. This fixes the sine/32-label/SD-0.3 task and the primary 20-update budget before 40 new draws (94000--94039). The selected exploratory 20-update gains are 12.0\% against the best plain rival, 16.7\% against the best enhanced rival and 11.5\% over plain Hermite. Its five-update result is secondary and motivates the independent follow-up. Neither arm resolves superiority over every rival at 20 updates, although the incremental Hermite benefit is resolved. Figure~\ref{fig:learned_budgets} compares budgets in this earlier study.

For both studies, predictor weight SDs are $1/2$ and $1/4$, biases are zero, and the shared initial filter is zero. Each update computes both gradients before either step. Adam moments are $(0.9,0.999)$ with $\epsilon=10^{-8}$. The first predictor update is zero. Random-permutation positions 97--160 are validation nodes and 161--256 test nodes, with the first 32 or 80 used for training. Validation checks every update and breaks ties by earlier checkpoint, then setting order. The enhanced menu includes all plain trajectories. In aligned Hermite coordinates the curvature matrix is $\operatorname{diag}(64k(k-1))$. Predictor rescaling can change effective regularization strength. Saved curves, states, sparse-recurrence replay and independent penalty calculations reproduce the reported selections and scores.

\begin{table}[!htbp]
\centering\footnotesize\setlength{\tabcolsep}{4pt}
\caption{Jointly trained nonlinear predictor: clean test MSE, mean $\pm$ sample SD. Left: independent five-update follow-up on 80 paired draws. Right: the earlier primary 20-update study on 40 different draws. Columns belong to distinct evaluations. Bold compares bases within each column, and $\dagger$ marks the principal independent HermNet result.}
\label{tab:learned_five}\label{tab:learned_primary}
\begin{tabular}{@{}lcccc@{}}
\toprule
 & \multicolumn{2}{c}{Five updates, 80 draws} & \multicolumn{2}{c}{20 updates, 40 draws}\\
\cmidrule(lr){2-3}\cmidrule(l){4-5}
Basis & Plain & Curvature & Plain & Curvature\\
\midrule
Hermite & $\key{0.34671}\pm0.17850$ & $\key{0.33048}\pm0.17748$ & $\best{0.24596}\pm0.07416$ & $\best{0.22172}\pm0.05512$\\
Chebyshev & $0.47850\pm0.24754$ & $0.46594\pm0.25408$ & $0.31178\pm0.14181$ & $0.26458\pm0.09755$\\
Bernstein & $0.38266\pm0.21185$ & $0.37405\pm0.20223$ & $0.24794\pm0.06100$ & $0.24121\pm0.05693$\\
Jacobi & $0.45375\pm0.24043$ & $0.43656\pm0.23010$ & $0.29565\pm0.13030$ & $0.25936\pm0.09915$\\
Legendre & $0.51357\pm0.27557$ & $0.50709\pm0.27154$ & $0.29928\pm0.15392$ & $0.25935\pm0.12550$\\
Power & $0.50562\pm0.23089$ & $0.50049\pm0.23569$ & $0.31315\pm0.11532$ & $0.28551\pm0.09086$\\
\bottomrule
\end{tabular}
\end{table}

\begin{table}[!htbp]
\centering
\caption{All planned comparisons in the independent five-update experiment. Difference is comparator MSE minus Hermite MSE in the same arm. Own compares plain with enhanced Hermite. Positive values favor Hermite or the enhancement. Wins count strict paired improvements among 80 draws.}
\label{tab:learned_five_bounds}
\small
\begin{tabular}{@{}llrrrr@{}}
\toprule
Comparator & Arm & Difference & $t$ lower & Boot. lower & Wins\\
\midrule
Chebyshev & plain & 0.13179 & 0.08940 & 0.09179 & 70\\
Bernstein & plain & \key{0.03595} & \key{0.01313} & 0.01482 & 57\\
Jacobi & plain & 0.10704 & 0.07013 & 0.07293 & 69\\
Legendre & plain & 0.16686 & 0.11404 & 0.11825 & 73\\
Power & plain & 0.15891 & 0.11181 & 0.11513 & 77\\
Chebyshev & enhanced & 0.13546 & 0.09113 & 0.09414 & 70\\
Bernstein & enhanced & \key{0.04357} & \key{0.02297} & 0.02452 & 62\\
Jacobi & enhanced & 0.10608 & 0.07034 & 0.07306 & 70\\
Legendre & enhanced & 0.17662 & 0.12379 & 0.12847 & 75\\
Power & enhanced & 0.17001 & 0.12120 & 0.12512 & 77\\
Hermite & own & \key{0.01623} & \key{0.00608} & 0.00724 & 31\\
\bottomrule
\end{tabular}
\end{table}

\begin{figure}[!htbp]
\centering
\includegraphics[width=\linewidth]{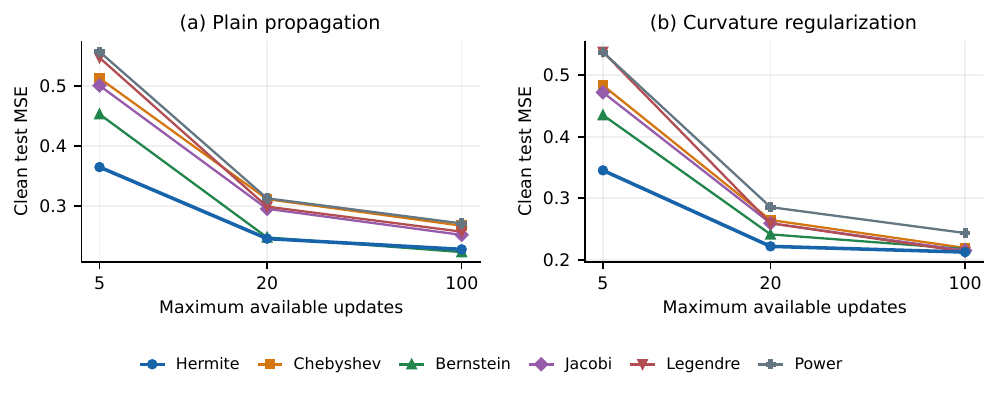}
\caption{All-basis budget sensitivity in the earlier learned-predictor study. Points are mean clean test MSE over the same 40 paired draws. Each point uses validation selection within its maximum update budget. The primary endpoint is 20 updates. Lines connect evaluated budgets, not individual training trajectories.}
\label{fig:learned_budgets}
\end{figure}

\FloatBarrier

\label{app:new_downstream}

\FloatBarrier
\subsubsection{Plain propagation and update budgets}
\label{sec:early_downstream}
\label{app:broad_scale_recovery}
The ten-dimensional weighted Boolean graph has 1,024 nodes, weights iid $U(0.75,1.25)$, a standard-normal scalar input and 80/100/844 training/validation/test nodes. The dominant condition sets $w_1=8$, changing scale, shape and task difficulty. Clean targets have unit RMS and observation noise SD 0.3. The original screen crosses two weight conditions, responses $\sin(0.75z)$, $\cos(0.75z)$, $e^{0.4z}$ and $(1+0.5z^2)^{-1}$ with $z=\sqrt{10}(\lambda-1)$, degrees 2/4/6 and four draws per cell (96 datasets). No clear Hermite advantage occurs at the primary 100-update screening budget. The degree-four sine task is selected adaptively, then evaluated on 40 new draws per weight condition, with primary budgets five and 20.

The original menu uses scales $1$ and $1/\sqrt{10}$, zero coefficients, and step fractions $1/4,1/2,1$ divided by the maximum training-Gram eigenvalue. Validation selects checkpoint, scale and fraction. QR orthogonalizes the training design and is an enhanced control. Exact unregularized predictions agree across coordinates to $1.09\times10^{-11}$. These are update-budget comparisons, not sparse-implementation timings.

The broad follow-up offers eight prescribed scales and three fractions (24 raw candidates per basis), followed by an optional analytical RMS normalization under each family's defining measure. It uses Gaussian, arcsine, Jacobi$(0.5,0.5)$ or uniform weights as appropriate, without graph moments or labels. Hermite's duplicate normalization is omitted. Validation checks every update through 20 and every five thereafter. After examination on the original draws, the protocol is fixed before 40 new draws. Table~\ref{tab:broad_scales} reports the five-update means. Figure~\ref{fig:broad_budgets} shows their budget dependence, and Table~\ref{tab:early_bounds} gives the original two-scale paired bounds.

\begin{table}[!htbp]
\centering
\caption{Independent evaluation with the broader scale search: clean test MSE, mean $\pm$ sample SD over 40 paired draws, at five updates. Every basis receives the same scales and step fractions. The second menu additionally offers analytical unit-RMS columns. Lower is better.}
\label{tab:broad_scales}
\small
\begin{tabular}{@{}lcc@{}}
\toprule
Basis & Raw scale menu & Normalization allowed\\
\midrule
Hermite & $\key{0.00511}\pm0.00370$ & $\key{0.00511}\pm0.00370$\\
Chebyshev & $0.08365\pm0.04840$ & $0.04949\pm0.02923$\\
Bernstein & $0.02165\pm0.01377$ & $0.02107\pm0.01430$\\
Jacobi & $0.03875\pm0.03010$ & $0.02964\pm0.02022$\\
Legendre & $0.15570\pm0.07266$ & $0.04350\pm0.01951$\\
Power & $0.16781\pm0.03678$ & $0.10122\pm0.02396$\\
\bottomrule
\end{tabular}
\end{table}

\begin{figure}[!htbp]
\centering
\includegraphics[width=\linewidth]{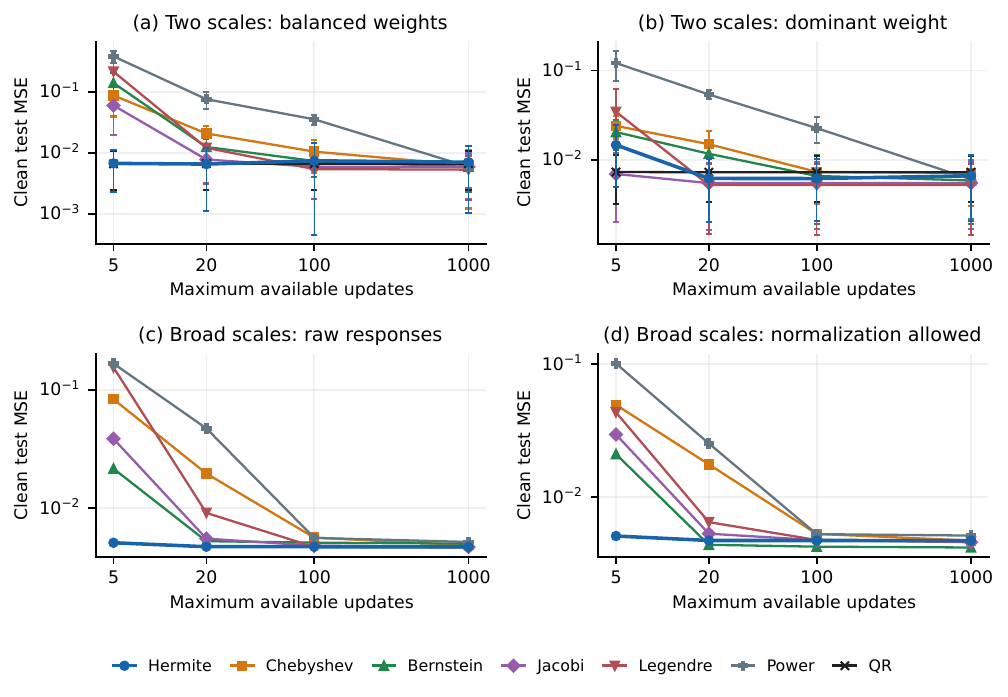}
\caption{Fixed-predictor budget sensitivity. Top: original two-scale evaluation, with mean $\pm$ graph SD across 40 independent draws per weight condition. QR is an enhanced control. Bottom: the separate broad-scale follow-up, showing means over 40 new paired draws, with and without an optional normalization menu. Both axes are logarithmic. Each point selects by validation within the available budget. Connecting lines are visual guides.}
\label{fig:early_balanced}
\label{fig:early_dominant}
\label{fig:broad_budgets}
\end{figure}

\begin{table}[!htbp]
\centering
\caption{Paired rival-minus-Hermite clean MSE on the 40 independent balanced graphs. Positive values favor Hermite. Lower bounds use a one-sided paired-$t$ approximation and Bonferroni adjustment across five rivals and two early budgets.}
\label{tab:early_bounds}
\small
\begin{tabular}{@{}lrrrrr@{}}
\toprule
Rival & Gain, 5 & Lower, 5 & Wins, 5 & Gain, 20 & Lower, 20\\
\midrule
Chebyshev & 0.08038 & 0.06042 & 40/40 & 0.01421 & 0.01156\\
Bernstein & 0.13258 & 0.10463 & 40/40 & 0.00592 & 0.00331\\
Jacobi & \key{0.05296} & \key{0.03622} & 40/40 & 0.00123 & -0.00069\\
Legendre & 0.20727 & 0.17364 & 40/40 & 0.00552 & 0.00160\\
Power & 0.37391 & 0.33782 & 40/40 & 0.06894 & 0.05944\\
\bottomrule
\end{tabular}
\end{table}

\FloatBarrier
\subsubsection{Derivative, curvature and odd-hop controls}
\label{sec:enhancement_value}
\label{app:enhancement_checks}
The derivative follow-up adds half the weighted norm $\mathbb E_{\mathcal N(1,1/10)}[g'(\Lambda)^2]$ at strengths $0,10^{-4},10^{-3},10^{-2},10^{-1}$, transformed identically in every basis. Steps use the full penalized Hessian. A three-noise-level screen on four reused draws selects SD 0.6 before 40 independent draws. Five-update enhanced Hermite leads every enhanced rival, but its incremental gain over plain Hermite has adjusted lower bound $-0.00026$. Nonzero strength is selected on 14/40 draws.

The scarcity screen crosses $K=4/6$, 16/32 labels and noise SD 0.6/1.2 on six reused draws, using the first labels of the existing training mask and unchanged validation/test nodes. It adds strength 1 to the prior grid. Selection requires 10\% gains over both plain Hermite and the best enhanced rival, maximizing the smaller gain with degree/label/noise ties. No derivative candidate qualifies. Curvature replaces the derivative norm by $\mathbb E[g''(\Lambda)^2]$, equal to $100\sum_k k(k-1)\theta_k^2$ in aligned coordinates. It selects $(K,m,\sigma)=(4,16,1.2)$ after exploratory own/rival gains of 17.7\%/24.4\%, before 40 new draws. Six Bonferroni-adjusted one-sided bounds at five updates resolve the five rival comparisons but not improvement over plain Hermite (lower bound $-0.00319$). Budgets 20 and 100 are secondary.

The odd-hop screen retains odd degrees, or $(B_j-B_{K-j})/\sqrt2$ for Bernstein, giving $\lceil K/2\rceil$ coefficients per basis and the same odd subspace. Normalization is recomputed and the derivative penalty is absent. No candidate meets both 10\% thresholds, so neither independent confirmation nor the planned cosine-mismatch evaluation is conducted.

\begin{figure}[!htbp]
\centering
\includegraphics[width=\linewidth]{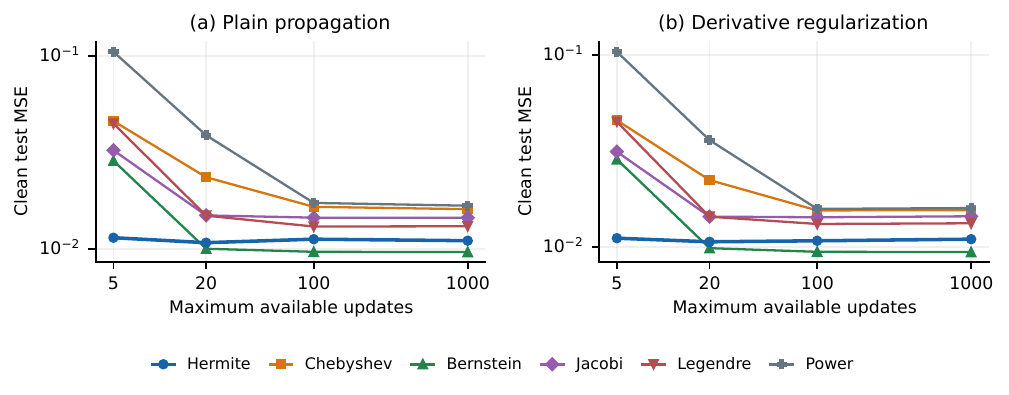}
\caption{First-derivative regularization at noise SD 0.6. Points show mean clean test MSE over 40 paired draws, selected by validation within each budget. The regularized menu includes zero penalty. Both axes are logarithmic. Hermite leads at the shortest budget, but its incremental improvement is unresolved and Bernstein becomes better with longer training.}
\label{fig:shared_prior_budgets}
\end{figure}

\begin{table}[!htbp]
\centering
\caption{Independent curvature-prior evaluation with 16 labels and noise SD 1.2. Five-update entries are mean $\pm$ sample SD across 40 independent paired draws. Enhanced entries at longer budgets are means.}
\label{tab:curvature_confirmation}
\small
\begin{tabular}{@{}lccrr@{}}
\toprule
Basis & Plain, 5 & Curvature, 5 & Curvature, 20 & Curvature, 100\\
\midrule
Hermite & $\best{0.23441}\pm0.23177$ & $\best{0.22850}\pm0.22874$ & \best{0.19830} & \best{0.19151}\\
Chebyshev & $0.32131\pm0.27501$ & $0.30756\pm0.25514$ & 0.23551 & 0.21397\\
Bernstein & $0.28927\pm0.23569$ & $0.27602\pm0.23348$ & 0.21082 & 0.19416\\
Jacobi & $0.30464\pm0.23769$ & $0.29328\pm0.23938$ & 0.23474 & 0.20784\\
Legendre & $0.31537\pm0.24678$ & $0.30523\pm0.24714$ & 0.23475 & 0.21428\\
Power & $0.34023\pm0.22880$ & $0.32909\pm0.23332$ & 0.23542 & 0.21291\\
\bottomrule
\end{tabular}
\end{table}

\FloatBarrier

The complete exploratory grids are retained in the reproducibility archive. They are summarized here to distinguish adaptive task selection from the independent evaluation endpoints.
\subsection{Classification and architectural ablations}
\label{app:classification}

\FloatBarrier
\subsubsection{Plain citation benchmarks}
\label{app:plain_real}
\label{sec:plain_real_matched}
\label{app:real_matched}
All bases use a width-32 ReLU predictor, input/hidden/prefilter dropout 0.5, predictor Adam decay 0.005 including biases, zero filter decay, unrestricted coefficients and identity initialization. Four settings cross degree 2/4 and tied predictor/filter rate 0.01/0.03. The cap is 400 updates with validation every five. Two tuning seeds select by accuracy then cross-entropy, followed by three distinct final seeds. CE-first selection is a separate sensitivity analysis. Cora uses the previously generated stratified split, while CiteSeer and PubMed use public splits. There is one split per graph.

The separate reference-moment arm fits a feature-only MLP for 300 updates at rate 0.01, selects its checkpoint on validation labels and uses its raw logits for moments. The scale floor is 0.05 with radius-six fallback (none triggered). This differs from degree-specific calibration in Table~\ref{tab:real}. The CiteSeer failure is retained. Saved states reproduce all final scores.

\begin{table}[!htbp]
\centering
\caption{Matched citation sensitivity, accuracy (\%), mean $\pm$ SD over three seeds. The first seven rows use CE-first configuration and checkpoint selection. The final row retains the separate accuracy-first reference-moment results. Primary plain results are in Table~\ref{tab:main_real}. Bold compares the six plain bases within each dataset.}
\label{tab:plain_real_ce}
\small
\begin{tabular}{@{}lccc@{}}
\toprule
Model & Cora & CiteSeer & PubMed\\
\midrule
Hermite & $79.63\pm1.35$ & $69.87\pm0.59$ & $78.37\pm0.72$\\
Chebyshev & $\best{81.57}\pm0.74$ & $71.73\pm0.12$ & $80.20\pm0.70$\\
Bernstein & $80.90\pm0.70$ & $70.13\pm0.75$ & $79.27\pm0.40$\\
Jacobi & $80.70\pm0.00$ & $69.73\pm1.69$ & $\best{80.93}\pm0.32$\\
Legendre & $80.73\pm1.17$ & $\best{71.97}\pm0.55$ & $80.83\pm0.72$\\
Power & $81.50\pm0.69$ & $71.73\pm0.15$ & $79.97\pm0.75$\\
Reference moments (CE-first) & $79.23\pm1.20$ & $31.47\pm6.74$ & $78.93\pm0.25$\\
Reference moments (accuracy-first) & $79.23\pm1.20$ & $33.07\pm8.29$ & $78.30\pm0.66$\\
\bottomrule
\end{tabular}
\end{table}

\begin{table}[!htbp]
\centering
\caption{Selected degree/learning-rate pairs for the matched citation experiment. Accuracy-first and CE-first use the same four-setting menu.}
\label{tab:plain_real_configs}
\footnotesize\setlength{\tabcolsep}{4pt}
\begin{tabular}{@{}lcccccc@{}}
\toprule
& \multicolumn{2}{c}{Cora} & \multicolumn{2}{c}{CiteSeer} & \multicolumn{2}{c}{PubMed}\\
Model & A & CE & A & CE & A & CE\\
\midrule
Hermite & 2/0.03 & 4/0.03 & 2/0.03 & 2/0.03 & 2/0.03 & 2/0.03\\
Chebyshev & 4/0.03 & 4/0.03 & 2/0.01 & 4/0.01 & 4/0.01 & 4/0.01\\
Bernstein & 4/0.03 & 4/0.03 & 4/0.03 & 4/0.03 & 4/0.01 & 2/0.03\\
Jacobi & 2/0.03 & 4/0.03 & 2/0.01 & 4/0.01 & 4/0.01 & 4/0.01\\
Legendre & 4/0.03 & 4/0.03 & 2/0.01 & 4/0.01 & 4/0.01 & 4/0.01\\
Power & 4/0.03 & 4/0.03 & 4/0.01 & 4/0.01 & 4/0.01 & 4/0.01\\
Reference moments & 4/0.03 & 4/0.03 & 4/0.03 & 4/0.03 & 4/0.01 & 2/0.03\\
\bottomrule
\end{tabular}
\end{table}

\FloatBarrier
\subsubsection{Ring classification and architectural controls}
\label{sec:shared_enhancements}
\label{app:new_classification}
A 32-case screen crosses products, stochastic blocks, geometric graphs and rings, Gaussian-shaped/uniform feature energy, and heat/high-pass/band-pass/sine teachers. Graphs have 256 nodes, eight feature channels and 40/64/152 masks. Eigenvalue quadrature masses and random signs generate spectral features. Labels threshold the normalized filtered first feature with Gaussian score-noise SD 0.3. All bases use degree four, rates 0.003/0.01/0.03, 300 Adam updates and validation every five. Accuracy-first and CE-first each select rate and checkpoint. Enhanced arms use input moments, floor 0.05 and response RMS normalization, without trained reference, radius guard or derivative penalty.

Four selected cases are evaluated on 12 new graphs and two optimizer seeds per graph. Enhanced Hermite leads the ring/uniform/wave common-basis comparison, but native JacobiConv scores higher. Gaussian-energy ring superiority is unresolved. Plain stochastic-block/uniform/heat becomes a tie with Jacobi at 89.72\%. Further fixed-scale and filter-only screens yield no primary Hermite lead.

The initialization ablation maps the native Jacobi polynomial $\sum_{k=0}^4\tanh(1)^kP_k^{(0.5,0.5)}(1-\lambda)$ into every basis. It compares direct coefficients with shared multiplicative decomposition, omitting the unused final source parameter. Predictor, rates, bias decay, zero dropout and 300 updates are shared. Source/cached predictions agree to $7.16\times10^{-7}$.

The dropout ablation shares input and prefilter masks at zero or 0.5, giving six rate/dropout settings per basis and coordinate policy. Predictor decay is 0.0005 including bias, filter decay zero, and degree/budget remain four/300. Masks are paired across arms. These interventions reuse graphs and are exploratory. Figure~\ref{fig:ring_confirm} reports the ring comparison, Table~\ref{tab:pcd_ablation} gives the initialization controls, and Figure~\ref{fig:dropout_accuracy} shows that common dropout removes the earlier Hermite lead.

\begin{figure}[!htbp]
\centering
\includegraphics[width=\linewidth]{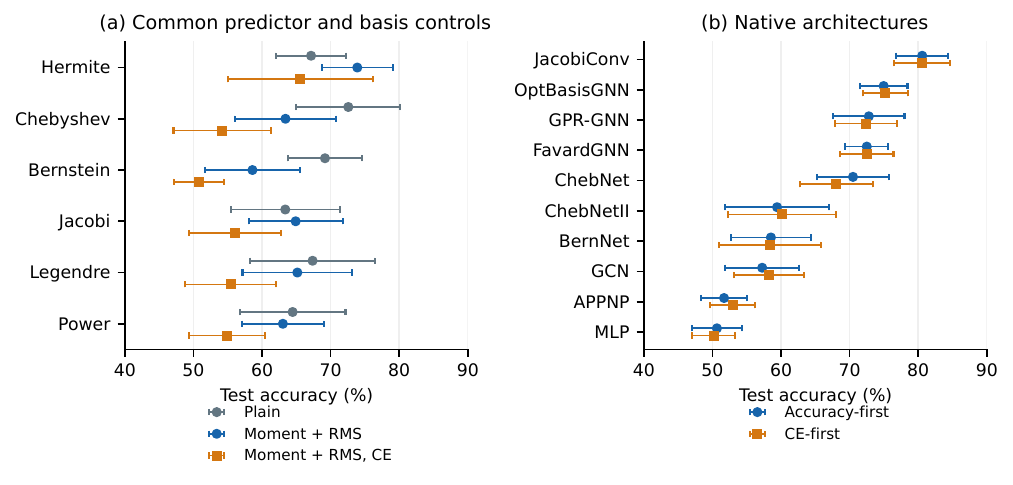}
\caption{Ring/uniform/wave classification: mean accuracy $\pm$ graph SD over 12 graphs, after averaging two optimizer seeds per graph. Left: common-predictor basis comparisons, with plain or moment/RMS coordinates. CE denotes cross-entropy-first selection. Right: native architectures on the same graphs. Their adaptively specified menus cross learning rate with dropout and differ from the common-model menus. Error bars describe graph variability, not confidence intervals.}
\label{fig:ring_confirm}
\label{fig:ring_native}
\end{figure}

\begin{table}[!htbp]
\centering
\caption{Common-basis initialization and coefficient-decomposition ablation, mean accuracy (\%) with CE-first in parentheses. All arms use no dropout. ID denotes identity initialization, J the mapped native initial polynomial, and PCD multiplicative coefficient decomposition.}
\label{tab:pcd_ablation}
\small
\begin{tabular}{@{}lrrrr@{}}
\toprule
Basis & ID, direct & J, direct & ID, PCD & J, PCD\\
\midrule
Hermite & 65.35 (60.64) & 66.72 (62.09) & 66.04 (60.64) & 63.65 (60.20)\\
Chebyshev & \best{71.08} (63.29) & 71.44 (66.72) & \best{71.82} (63.32) & 69.60 (63.29)\\
Bernstein & 70.23 (62.88) & 69.96 (64.47) & 69.71 (60.25) & 68.67 (62.01)\\
Jacobi & 67.93 (62.31) & \best{71.55} (\best{67.93}) & 67.52 (62.83) & 69.33 (63.05)\\
Legendre & 68.97 (\best{64.53}) & 68.86 (65.08) & 69.33 (\best{64.04}) & \best{70.81} (\best{65.38})\\
Power & 64.64 (58.25) & 65.21 (59.05) & 64.17 (57.89) & 66.01 (60.39)\\
\bottomrule
\end{tabular}
\end{table}

\clearpage
\begin{figure}[H]
\centering
\includegraphics[width=\linewidth]{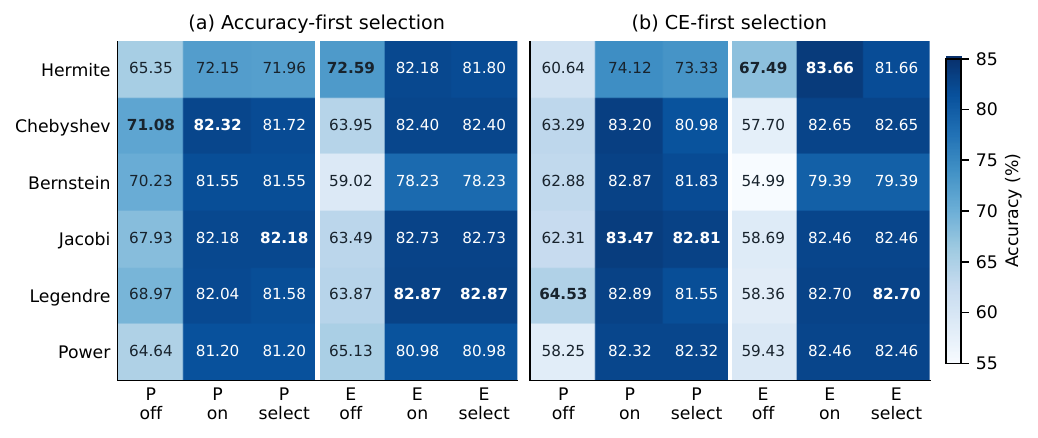}
\caption{Dropout ablation on 12 reused ring graphs. Cells give every reported mean accuracy (\%), with column maxima in bold. P/E denote plain/moment-RMS coordinates. Off/on use dropout 0/0.5, and select uses validation. A common color scale compares both checkpoint rules. The earlier Hermite lead disappears under the common dropout search.}
\label{fig:dropout_accuracy}
\label{fig:dropout_ce}
\end{figure}

\FloatBarrier
\subsubsection{Calibrated citation and synthetic neural comparisons}
\label{sec:neural_results}
\label{app:neural_tables}
\label{app:interventions}
Protocols are in Appendices~\ref{app:classification_protocol} and~\ref{app:regression_protocol}. Table~\ref{tab:real} evaluates trained-reference HermNet, not plain Algorithm~\ref{alg:filter}. JacobiConv has the largest mean on all three datasets. The limited OptBasisGNN menu omits its degree-12 PubMed preset. Synthetic classification averages three reference initializations within each of five graphs. Regression averages two optimizer seeds within each of eight evaluation graphs. Native architectures retain their own structures and penalties.

Development ablations favor affine over nonlinear predictors in all 64 tested basis/condition/penalty means, a shared capacity effect. Changing the calibration reference or extending 1,200 updates to 3,600 does not give a consistent Hermite gain. A conditionally selected positive derivative strength 0.01 increases MSE in all eight learned-regression conditions when compared with removing that penalty at otherwise fixed settings. A separate fixed-feature comparison yields MSE 0.02993 for Hermite derivative, 0.03548 for Hermite ridge and 0.03094 for a stronger Chebyshev degree prior. These reused-data ablations do not establish independent confirmation.

\begin{table}[!htbp]
\centering
\caption{Node classification accuracy (\%) under the same 16-configuration search count per method and dataset. HermNet uses trained-reference calibration. The plain fixed-coordinate model is evaluated separately. Entries are mean $\pm$ sample SD across three final optimizer seeds on one fixed split. Bold marks the highest mean, without implying statistical significance. Architecture-specific search spaces and parameter counts differ.}
\label{tab:real}
\small
\begin{tabular}{@{}lccc@{}}
\toprule
Method & Cora & CiteSeer & PubMed\\
\midrule
\texttt{MLP} & $61.40\pm0.53$ & $60.53\pm0.55$ & $70.83\pm0.40$\\
\HermitePoly{} & $83.80\pm0.26$ & $71.30\pm1.22$ & $74.83\pm1.46$\\
\ChebNetII{} & $83.60\pm0.50$ & $71.17\pm0.64$ & $73.00\pm5.40$\\
\JacobiConv{} & $\best{83.90}\pm0.30$ & $\best{72.13}\pm0.57$ & $\best{76.53}\pm0.42$\\
\FavardGNN{} & $80.20\pm1.18$ & $69.17\pm0.51$ & $74.20\pm1.67$\\
\OptBasisGNN{} & $68.17\pm3.44$ & $57.70\pm2.69$ & $69.67\pm1.15$\\
\bottomrule
\end{tabular}

\end{table}

\begin{table}[!htbp]
\centering\small
\caption{Synthetic classification summary, accuracy (\%). Each entry averages the four reported condition means for two targets and frozen/joint predictors at a fixed family and spectral-energy shape. Values are derived from the rounded condition means, and are descriptive because conditions reuse graphs. Bold compares methods within a row. Complete condition means and graph SDs are retained in the experimental archive.}
\label{tab:factorial_sbm}\label{tab:factorial_geo}
\begin{tabular}{@{}llccc@{}}
\toprule
Family & Energy & Hermite & Cheb. degree & Cheb. ridge\\
\midrule
SBM & G.4 & 42.77 & 42.73 & \best{42.83}\\
SBM & Uniform & \best{43.79} & 43.67 & 43.61\\
SBM & Bimodal & \best{41.14} & 41.01 & 40.90\\
Geometric & G.4 & \best{41.19} & 41.17 & 41.16\\
Geometric & Uniform & 42.23 & 42.32 & \best{42.33}\\
Geometric & Bimodal & \best{41.86} & 41.84 & 41.70\\
\bottomrule
\end{tabular}
\end{table}

\begin{table}[!htbp]
\centering\footnotesize\setlength{\tabcolsep}{4pt}
\caption{Learned-feature regression, clean test MSE, mean $\pm$ graph SD over eight new graphs after averaging two optimizer seeds. Matched bases share the predictor and objective. Native models retain their architectures. Bold marks the smallest column mean across all displayed procedures.}
\label{tab:regression_sbm}
\label{tab:regression_geo}
\textbf{(a) Stochastic-block graphs}\par\smallskip
\begin{tabular}{@{}lcccc@{}}
\toprule
Method & G.4 smooth & G.4 cutoff & Uniform smooth & Uniform cutoff\\
\midrule
Hermite (matched) & $0.250\pm0.101$ & $0.382\pm0.139$ & $0.199\pm0.078$ & $0.252\pm0.040$\\
Chebyshev (matched) & $0.235\pm0.054$ & $0.402\pm0.211$ & $0.255\pm0.157$ & $0.299\pm0.152$\\
Jacobi (fixed) & $0.284\pm0.076$ & $0.405\pm0.219$ & $0.157\pm0.035$ & $0.303\pm0.099$\\
Whitening & $0.210\pm0.079$ & $0.389\pm0.156$ & $0.194\pm0.128$ & $0.427\pm0.323$\\
\ChebNetII{} & $0.182\pm0.035$ & $0.289\pm0.066$ & $0.104\pm0.033$ & $0.208\pm0.045$\\
\JacobiConv{} & $\best{0.051}\pm0.012$ & $\best{0.170}\pm0.046$ & $\best{0.039}\pm0.011$ & $\best{0.122}\pm0.024$\\
\FavardGNN{} & $0.289\pm0.049$ & $0.509\pm0.087$ & $0.194\pm0.068$ & $0.334\pm0.058$\\
\OptBasisGNN{} & $0.182\pm0.033$ & $0.360\pm0.093$ & $0.141\pm0.021$ & $0.277\pm0.065$\\
\bottomrule
\end{tabular}
\par\medskip\textbf{(b) Geometric graphs}\par\smallskip
\begin{tabular}{@{}lcccc@{}}
\toprule
Method & G.4 smooth & G.4 cutoff & Uniform smooth & Uniform cutoff\\
\midrule
Hermite (matched) & $0.280\pm0.074$ & $0.354\pm0.083$ & $0.156\pm0.068$ & $0.239\pm0.061$\\
Chebyshev (matched) & $0.274\pm0.043$ & $0.357\pm0.142$ & $0.248\pm0.102$ & $0.319\pm0.092$\\
Jacobi (fixed) & $0.260\pm0.043$ & $0.321\pm0.066$ & $0.173\pm0.041$ & $0.303\pm0.089$\\
Whitening & $0.310\pm0.126$ & $0.365\pm0.125$ & $0.173\pm0.032$ & $0.328\pm0.176$\\
\ChebNetII{} & $0.249\pm0.067$ & $0.299\pm0.050$ & $0.170\pm0.071$ & $0.220\pm0.046$\\
\JacobiConv{} & $\best{0.079}\pm0.032$ & $\best{0.224}\pm0.059$ & $\best{0.056}\pm0.018$ & $\best{0.165}\pm0.030$\\
\FavardGNN{} & $0.378\pm0.101$ & $0.505\pm0.104$ & $0.238\pm0.051$ & $0.351\pm0.057$\\
\OptBasisGNN{} & $0.311\pm0.029$ & $0.343\pm0.059$ & $0.302\pm0.045$ & $0.279\pm0.099$\\
\bottomrule
\end{tabular}
\end{table}

\FloatBarrier
\subsubsection{Expanded product, citation and WebKB comparisons}
\label{sec:later_learning}
\label{app:later_tables}
Appendix~\ref{app:later_protocol} specifies the expanded searches. Results include unfavorable Hermite outcomes and distinguish common models from native architectures. WebKB uses Geom-GCN splits \citep{pei2020}. CE-first results change checkpoint selection at fixed selected configurations. Means are descriptive development results, not population rankings.

\begin{table}[!htbp]\centering\footnotesize\setlength{\tabcolsep}{4pt}
\caption{Validation-selected product-graph procedures. Accuracy percentages are mean and sample SD over eight graph/optimizer pairs per condition. A and E denote accuracy-first and cross-entropy-first checkpoint selection on the same selected configuration. These are separate from the four-channel wave experiments.}\label{tab:selected_product}
\begin{tabular}{@{}lcccc@{}}
\toprule
Method & Balanced A & Balanced E & Dominant A & Dominant E\\
\midrule
\texttt{Hermite} & $87.61\pm2.08$ & $86.37\pm2.44$ & $86.33\pm2.10$ & $86.45\pm1.86$\\
\texttt{Chebyshev} & $87.20\pm2.44$ & $86.78\pm1.78$ & $87.12\pm1.30$ & $85.96\pm1.64$\\
\texttt{Bernstein} & $86.97\pm3.01$ & $87.01\pm2.13$ & $\best{87.50}\pm1.76$ & $\best{86.94}\pm1.25$\\
\texttt{Jacobi} & $86.78\pm2.89$ & $86.67\pm2.49$ & $86.71\pm1.87$ & $85.77\pm1.46$\\
\texttt{Legendre} & $\best{87.76}\pm1.63$ & $\best{87.20}\pm1.97$ & $86.56\pm1.88$ & $85.77\pm1.46$\\
\texttt{Power} & $87.69\pm1.89$ & $86.90\pm2.21$ & $87.09\pm2.02$ & $86.41\pm1.65$\\
\texttt{QR} & $86.33\pm1.80$ & $86.63\pm1.91$ & $86.78\pm2.14$ & $86.78\pm1.60$\\
\texttt{ChebNet} & $83.62\pm2.27$ & $84.22\pm2.16$ & $83.28\pm1.55$ & $84.22\pm2.42$\\
\texttt{ChebNetII} & $85.13\pm1.96$ & $83.28\pm2.80$ & $83.73\pm2.01$ & $84.34\pm1.70$\\
\texttt{BernNet} & $86.11\pm1.57$ & $84.11\pm4.04$ & $84.98\pm1.95$ & $83.81\pm2.18$\\
\texttt{JacobiConv} & $86.71\pm2.94$ & $87.05\pm2.53$ & $85.54\pm1.97$ & $86.22\pm2.07$\\
\texttt{GPR-GNN} & $85.13\pm2.10$ & $83.89\pm2.15$ & $84.64\pm2.04$ & $85.02\pm1.95$\\
\texttt{FavardGNN} & $83.81\pm1.11$ & $82.98\pm0.97$ & $82.61\pm1.53$ & $83.55\pm2.46$\\
\texttt{OptBasisGNN} & $85.35\pm2.22$ & $85.47\pm2.07$ & $83.62\pm2.76$ & $83.28\pm1.72$\\
\texttt{APPNP} & $85.54\pm1.95$ & $85.39\pm1.80$ & $84.64\pm0.94$ & $84.90\pm1.17$\\
\texttt{GCN} & $77.60\pm3.26$ & $77.07\pm2.45$ & $75.90\pm3.32$ & $76.77\pm2.70$\\
\texttt{MLP} & $76.20\pm2.96$ & $78.50\pm2.21$ & $77.94\pm2.21$ & $78.35\pm1.52$\\
\bottomrule
\end{tabular}\end{table}

\begin{table}[!htbp]\centering\footnotesize\setlength{\tabcolsep}{4pt}
\caption{Validation-selected real-data procedures, accuracy-first checkpoints. Entries are accuracy percentages with sample SD. CiteSeer uses three optimizer seeds on one partition. WebKB uses ten overlapping split means, each averaging two seeds. The uncertainty sources are different. QR was not included in this CiteSeer comparison.}\label{tab:selected_real_accuracy}
\begin{tabular}{@{}lcccc@{}}
\toprule
Method & CiteSeer & Cornell & Texas & Wisconsin\\
\midrule
\texttt{Hermite} & $67.03\pm0.06$ & $66.35\pm8.65$ & $74.86\pm9.67$ & $80.59\pm2.16$\\
\texttt{Chebyshev} & $65.53\pm3.10$ & $69.46\pm5.67$ & $79.86\pm6.45$ & $80.88\pm4.72$\\
\texttt{Bernstein} & $58.03\pm1.36$ & $66.35\pm7.68$ & $78.11\pm6.68$ & $75.98\pm3.28$\\
\texttt{Jacobi} & $69.07\pm0.45$ & $70.95\pm7.49$ & $78.51\pm6.87$ & $80.10\pm3.30$\\
\texttt{Legendre} & $68.60\pm0.17$ & $72.97\pm5.02$ & $78.78\pm6.68$ & $79.80\pm4.09$\\
\texttt{Power} & $70.90\pm0.10$ & $72.97\pm6.98$ & $77.70\pm8.79$ & $81.57\pm4.08$\\
\texttt{QR} & --- & $65.27\pm6.62$ & $77.30\pm7.23$ & $80.69\pm4.26$\\
\texttt{ChebNet} & $69.77\pm0.72$ & $62.57\pm8.50$ & $76.22\pm5.45$ & $75.10\pm5.39$\\
\texttt{ChebNetII} & $69.60\pm0.26$ & $73.78\pm4.09$ & $\best{81.89}\pm4.04$ & $80.29\pm3.77$\\
\texttt{BernNet} & $\best{71.77}\pm0.83$ & $73.51\pm6.69$ & $80.95\pm4.34$ & $83.73\pm2.45$\\
\texttt{JacobiConv} & $70.37\pm0.85$ & $65.68\pm6.28$ & $79.19\pm7.54$ & $80.59\pm4.73$\\
\texttt{GPR-GNN} & $71.20\pm0.61$ & $72.03\pm9.47$ & $81.62\pm5.30$ & $83.82\pm3.82$\\
\texttt{FavardGNN} & $67.63\pm0.38$ & $68.92\pm7.59$ & $75.95\pm3.92$ & $79.41\pm3.17$\\
\texttt{OptBasisGNN} & $53.00\pm3.00$ & $62.97\pm6.75$ & $75.54\pm3.52$ & $76.27\pm5.29$\\
\texttt{APPNP} & $71.03\pm0.35$ & $49.86\pm5.42$ & $63.24\pm6.04$ & $61.37\pm2.93$\\
\texttt{GCN} & $\best{71.77}\pm0.95$ & $50.95\pm8.23$ & $64.86\pm4.37$ & $61.37\pm6.36$\\
\texttt{MLP} & $55.80\pm1.23$ & $\best{74.46}\pm3.74$ & $81.62\pm4.04$ & $\best{85.10}\pm4.00$\\
\bottomrule
\end{tabular}\end{table}

\begin{table}[!htbp]\centering\footnotesize\setlength{\tabcolsep}{4pt}
\caption{CE-first sensitivity of the validation-selected real-data procedures, accuracy (\%), mean $\pm$ SD. The selected configurations are held fixed. The highest non-Hermite mean is a descriptive comparison, not a test-based selection rule. The complete roster appears in the experimental archive. Evaluation units match Table~\ref{tab:selected_real_accuracy}.}\label{tab:selected_real_ce}
\begin{tabular}{@{}lclc@{}}
\toprule
Dataset & Hermite & Highest observed rival & Rival accuracy\\
\midrule
CiteSeer & $64.20\pm0.26$ & \texttt{APPNP} & $\best{71.80}\pm0.26$\\
Cornell & $65.41\pm8.14$ & \texttt{MLP} & $\best{74.05}\pm3.48$\\
Texas & $74.59\pm7.42$ & \texttt{GPR-GNN} & $\best{82.03}\pm4.13$\\
Wisconsin & $80.49\pm3.62$ & \texttt{MLP} & $\best{84.61}\pm4.39$\\
\bottomrule
\end{tabular}\end{table}

\begin{table}[!htbp]\centering\footnotesize\setlength{\tabcolsep}{4pt}
\caption{Prespecified WebKB common-model references, primary accuracy percentages. U is the nonlinear, no-dropout reference with fixed coordinates and raw responses. T uses the linear-predictor/dropout configuration transferred from the synthetic task. Each entry is mean and sample SD over ten split means. These references are included when describing the strongest observed rival.}\label{tab:webkb_references}
\begin{tabular}{@{}lcccccc@{}}
\toprule
Family & Cornell U & Cornell T & Texas U & Texas T & Wisconsin U & Wisconsin T\\
\midrule
\texttt{Hermite} & $67.57\pm6.01$ & $63.11\pm6.50$ & $78.92\pm5.30$ & $71.49\pm7.97$ & $77.25\pm4.28$ & $74.71\pm4.66$\\
\texttt{Chebyshev} & $67.30\pm4.72$ & $66.62\pm6.05$ & $80.00\pm5.65$ & $75.14\pm5.67$ & $81.37\pm3.81$ & $76.08\pm4.69$\\
\texttt{Bernstein} & $73.38\pm4.64$ & $62.03\pm6.63$ & $\best{82.30}\pm7.19$ & $76.22\pm6.02$ & $82.35\pm3.87$ & $72.06\pm6.78$\\
\texttt{Jacobi} & $70.68\pm5.56$ & $67.70\pm2.25$ & $79.46\pm4.58$ & $76.49\pm6.66$ & $82.06\pm4.11$ & $75.49\pm5.45$\\
\texttt{Legendre} & $70.68\pm4.18$ & $67.97\pm3.06$ & $80.00\pm5.58$ & $76.08\pm6.92$ & $83.53\pm2.73$ & $75.39\pm5.28$\\
\texttt{Power} & $\best{75.27}\pm5.14$ & $\best{68.65}\pm3.54$ & $81.76\pm4.69$ & $\best{77.70}\pm5.67$ & $\best{84.31}\pm4.26$ & $\best{76.86}\pm4.51$\\
\texttt{QR} & $65.27\pm7.70$ & $62.30\pm5.23$ & $76.76\pm5.83$ & $76.35\pm8.38$ & $79.71\pm2.85$ & $74.90\pm2.54$\\
\bottomrule
\end{tabular}\end{table}

\FloatBarrier
\clearpage
\subsubsection{Channel-specific filters}
\label{sec:channel_results}
Full channel matrices enlarge every basis's architecture. Tables~\ref{tab:channel_complete} and~\ref{tab:channel_effects} report complete scores and paired full-minus-factorized differences. Some references share a configuration and trajectory. They are not independent fits. Several bases improve, and the selection criterion changes rankings.

\begin{table}[H]
\centering\footnotesize\setlength{\tabcolsep}{4pt}
\caption{Four-channel wave classification: full and factorized common models and the native roster. A/E denote accuracy-first/CE-first checkpoint rules. Accuracy (\%), mean $\pm$ SD across eight graph means, each averaging two optimizers. Common arms share cached responses and dropout masks across orders. Native configurations transfer from the diffusion task. ChebConv-linear retains its native initialization and regularization. Bold marks the largest column mean. Additional fixed, raw and tuned common references are retained in the experimental archive.}\label{tab:channel_complete}
\begin{tabular}{@{}lccc@{}}
\toprule
Procedure & A accuracy & E accuracy & Parameters\\
\midrule
\texttt{Hermite} (factorized) & $75.26\pm1.63$ & $75.40\pm2.03$ & 28\\
\texttt{Hermite} (full) & $\best{78.93}\pm2.04$ & $78.60\pm1.88$ & 82\\
\texttt{Chebyshev} (factorized) & $74.89\pm2.19$ & $75.23\pm2.51$ & 28\\
\texttt{Chebyshev} (full) & $78.35\pm2.55$ & $78.77\pm1.94$ & 82\\
\texttt{Bernstein} (factorized) & $74.64\pm3.14$ & $74.96\pm2.97$ & 28\\
\texttt{Bernstein} (full) & $77.60\pm2.86$ & $78.11\pm2.09$ & 82\\
\texttt{Jacobi} (factorized) & $75.21\pm1.71$ & $75.73\pm2.28$ & 28\\
\texttt{Jacobi} (full) & $78.60\pm2.38$ & $78.75\pm2.10$ & 82\\
\texttt{Legendre} (factorized) & $75.04\pm1.90$ & $75.79\pm2.28$ & 28\\
\texttt{Legendre} (full) & $78.60\pm2.48$ & $78.95\pm2.13$ & 82\\
\texttt{Power} (factorized) & $75.56\pm1.37$ & $76.26\pm2.09$ & 28\\
\texttt{Power} (full) & $78.71\pm2.31$ & $79.07\pm1.96$ & 82\\
\texttt{QR} (factorized) & $74.68\pm1.92$ & $75.90\pm2.45$ & 28\\
\texttt{QR} (full) & $77.94\pm1.78$ & $78.82\pm2.05$ & 82\\
\texttt{ChebNet} (tuned) & $73.64\pm2.18$ & $73.91\pm2.23$ & 994\\
\texttt{ChebNetII} (tuned) & $63.97\pm3.44$ & $64.44\pm2.57$ & 357\\
\texttt{BernNet} (tuned) & $64.85\pm2.05$ & $64.46\pm2.23$ & 361\\
\texttt{JacobiConv} (tuned) & $75.66\pm1.65$ & $72.18\pm4.40$ & 33\\
\texttt{GPR-GNN} (tuned) & $63.91\pm2.20$ & $64.36\pm3.39$ & 357\\
\texttt{FavardGNN} (tuned) & $68.64\pm2.59$ & $70.59\pm2.97$ & 642\\
\texttt{OptBasisGNN} (tuned) & $73.57\pm3.14$ & $74.79\pm2.83$ & 514\\
\texttt{APPNP} (tuned) & $60.94\pm2.65$ & $60.81\pm3.28$ & 354\\
\texttt{GCN} (tuned) & $56.17\pm3.24$ & $56.80\pm3.09$ & 354\\
\texttt{MLP} (tuned) & $63.84\pm3.07$ & $64.19\pm3.03$ & 178\\
\texttt{ChebConv-linear} (fixed) & $78.73\pm1.98$ & $\best{79.39}\pm2.79$ & 82\\
\bottomrule
\end{tabular}
\end{table}

\begin{table}[H]\centering\footnotesize\setlength{\tabcolsep}{4pt}
\caption{Shared effect of replacing the factorized channel filters by unrestricted full channel filters, in percentage points. SD is over eight paired graph means. Positive counts refer to the primary accuracy-first rule.}\label{tab:channel_effects}
\begin{tabular}{@{}lccc@{}}
\toprule
Family & A effect & Positive & E effect\\
\midrule
\texttt{Hermite} & $\best{3.67}\pm1.45$ & 8/8 & $3.20\pm1.62$\\
\texttt{Chebyshev} & $3.46\pm2.24$ & 8/8 & $\best{3.54}\pm2.30$\\
\texttt{Bernstein} & $2.96\pm3.62$ & 6/8 & $3.14\pm2.03$\\
\texttt{Jacobi} & $3.39\pm2.13$ & 8/8 & $3.01\pm2.36$\\
\texttt{Legendre} & $3.56\pm2.19$ & 8/8 & $3.16\pm2.41$\\
\texttt{Power} & $3.14\pm1.98$ & 8/8 & $2.80\pm2.30$\\
\texttt{QR} & $3.26\pm1.27$ & 8/8 & $2.92\pm2.12$\\
\bottomrule
\end{tabular}\end{table}

\FloatBarrier
\subsection{Conditioning and regularization}
\label{app:results}

\FloatBarrier
\subsubsection{Coordinate conditioning}
\label{sec:coordinate_results}
\label{app:coordinate_tables}
Appendix~\ref{app:coordinate_protocol} fixes equivalent objectives and exact solutions. Counts are medians with successes out of 32 trajectories sharing eight graphs. Counts above 2,000 are censored. Whitening remains fixed under mismatch. Narrow energy and degree two favor Hermite conditioning, but no width/degree pair satisfies every stability, optimization and adequacy criterion in both graph families. Figure~\ref{fig:coordinates} and Tables~\ref{tab:coord_stable}--\ref{tab:coordinate_criteria} report these diagnostics.

\begin{figure}[!htbp]
\centering
\includegraphics[width=\linewidth]{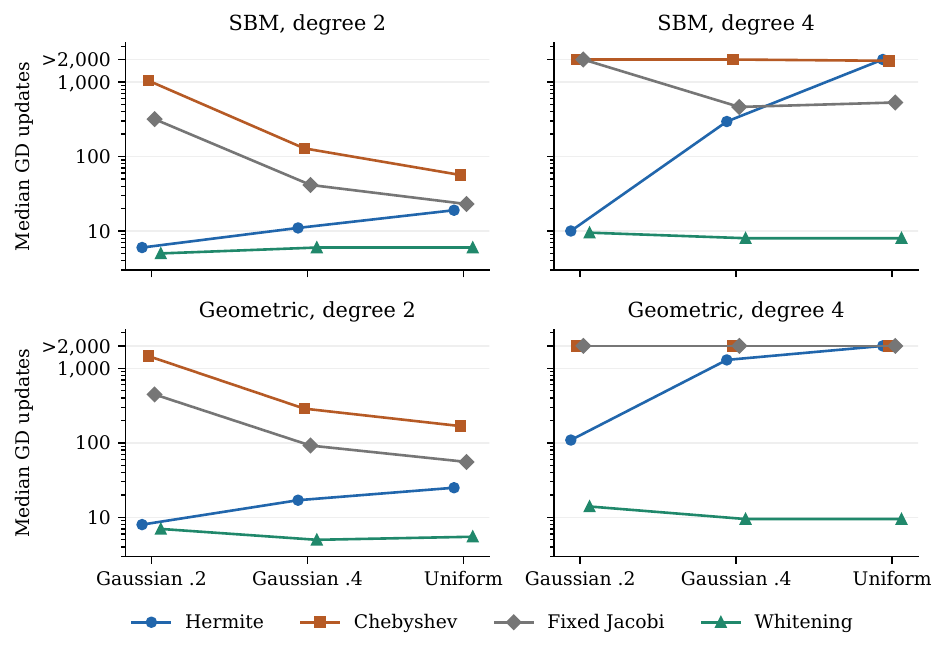}
\caption{Coordinate optimization with the reference equal to the filtered input ($\rho=0$). Each point is the median over eight graphs, two label counts, and two target responses, giving 32 paired trajectories. Counts above 2,000 denote trajectories that did not reach the objective-gap tolerance within the budget and enter the median as 2,001. The vertical axis is logarithmic. Narrow spectral energy favors Hermite coordinates, while whitening remains competitive. The figure reports iteration counts without measuring total runtime.}
\label{fig:coordinates}
\end{figure}

\begin{table}[!htbp]
\centering\footnotesize\setlength{\tabcolsep}{4pt}
\caption{Equivalent-coordinate optimization. Median gradient-descent updates with successes out of 32 trajectories (eight shared graphs). Counts above 2,000 are censored. G.2/G.4 denote Gaussian signal-energy widths. Whitening is fixed under mismatch. Bold marks the smallest finite count within each row. Fully censored rows are not ranked.}
\label{tab:coord_stable}
\label{tab:coord_mismatch}
\textbf{(a) Matched input and reference, $\rho=0$}\par\smallskip
\begin{tabular}{@{}llrcccc@{}}
\toprule
Family & Energy & $K$ & Hermite & Chebyshev & Jacobi & Whitening\\
\midrule
SBM & G.2 & 2 & $6$ (32/32) & $1042.5$ (32/32) & $317$ (32/32) & $\best{5}$ (32/32)\\
SBM & G.2 & 4 & $10$ (32/32) & $>2000$ (0/32) & $>2000$ (1/32) & $\best{9.5}$ (32/32)\\
SBM & G.4 & 2 & $11$ (32/32) & $128.5$ (32/32) & $41.5$ (32/32) & $\best{6}$ (32/32)\\
SBM & G.4 & 4 & $294.5$ (32/32) & $>2000$ (6/32) & $462$ (32/32) & $\best{8}$ (32/32)\\
SBM & Uniform & 2 & $19$ (32/32) & $56.5$ (32/32) & $23$ (32/32) & $\best{6}$ (32/32)\\
SBM & Uniform & 4 & $>2000$ (15/32) & $1922$ (20/32) & $530.5$ (32/32) & $\best{8}$ (32/32)\\
Geometric & G.2 & 2 & $8$ (32/32) & $1462$ (24/32) & $446.5$ (32/32) & $\best{7}$ (32/32)\\
Geometric & G.2 & 4 & $109$ (32/32) & $>2000$ (0/32) & $>2000$ (0/32) & $\best{14}$ (32/32)\\
Geometric & G.4 & 2 & $17$ (32/32) & $288$ (32/32) & $92$ (32/32) & $\best{5}$ (32/32)\\
Geometric & G.4 & 4 & $1297.5$ (27/32) & $>2000$ (0/32) & $>2000$ (0/32) & $\best{9.5}$ (32/32)\\
Geometric & Uniform & 2 & $25$ (32/32) & $168$ (32/32) & $55.5$ (32/32) & $\best{5.5}$ (32/32)\\
Geometric & Uniform & 4 & $>2000$ (3/32) & $>2000$ (0/32) & $>2000$ (1/32) & $\best{9.5}$ (32/32)\\
\bottomrule
\end{tabular}
\par\medskip\textbf{(b) Input mismatch, $\rho=0.5$}\par\smallskip
\begin{tabular}{@{}llrcccc@{}}
\toprule
Family & Energy & $K$ & Hermite & Chebyshev & Jacobi & Whitening\\
\midrule
SBM & G.2 & 2 & $50$ (32/32) & $176.5$ (32/32) & $52.5$ (32/32) & $\best{49.5}$ (32/32)\\
SBM & G.2 & 4 & $1178.5$ (30/32) & $>2000$ (1/32) & $\best{819.5}$ (32/32) & $1551$ (26/32)\\
SBM & G.4 & 2 & $\best{15}$ (32/32) & $91.5$ (32/32) & $32$ (32/32) & $16$ (32/32)\\
SBM & G.4 & 4 & $570.5$ (32/32) & $>2000$ (6/32) & $535$ (32/32) & $\best{45}$ (32/32)\\
SBM & Uniform & 2 & $29$ (32/32) & $68$ (32/32) & $31$ (32/32) & $\best{20}$ (32/32)\\
SBM & Uniform & 4 & $>2000$ (0/32) & $>2000$ (12/32) & $875$ (31/32) & $\best{33}$ (32/32)\\
Geometric & G.2 & 2 & $\best{55.5}$ (32/32) & $531.5$ (32/32) & $164$ (32/32) & $73.5$ (32/32)\\
Geometric & G.2 & 4 & $>2000$ (3/32) & $>2000$ (1/32) & $>2000$ (0/32) & $>2000$ (9/32)\\
Geometric & G.4 & 2 & $41.5$ (32/32) & $292.5$ (32/32) & $92$ (32/32) & $\best{19}$ (32/32)\\
Geometric & G.4 & 4 & $>2000$ (6/32) & $>2000$ (0/32) & $>2000$ (0/32) & $\best{40.5}$ (32/32)\\
Geometric & Uniform & 2 & $39$ (32/32) & $214$ (32/32) & $81.5$ (32/32) & $\best{18}$ (32/32)\\
Geometric & Uniform & 4 & $>2000$ (2/32) & $>2000$ (0/32) & $>2000$ (1/32) & $\best{28.5}$ (32/32)\\
\bottomrule
\end{tabular}
\end{table}

\begin{table}[!htbp]\centering\small
\caption{Conditioning, mismatch, and degree adequacy for all Gaussian width/degree choices. Stable condition pools graphs and label counts. Mismatch ratio is the median paired perturbed-to-stable condition ratio. Validation ratio compares a degree's exact noisy-validation MSE with the better degree at the same width, pooling both targets and mismatch levels. Bold compares degrees within each family and width. Lower diagnostic values are better.}
\label{tab:coordinate_criteria}
\begin{tabular}{@{}lrrrrr@{}}
\toprule
Family & Width & $K$ & Stable $\kappa_H$ & Mismatch ratio & Validation ratio\\
\midrule
SBM & .2 & 2 & \best{1.46} & \best{6.57} & 1.046\\
Geometric & .2 & 2 & \best{2.10} & \best{4.47} & 1.081\\
SBM & .2 & 4 & 2.61 & 74.66 & \best{1.000}\\
Geometric & .2 & 4 & 46.15 & 26.68 & \best{1.000}\\
SBM & .4 & 2 & \best{2.50} & \best{1.30} & 1.096\\
Geometric & .4 & 2 & \best{4.08} & \best{1.59} & 1.071\\
SBM & .4 & 4 & 63.61 & 1.81 & \best{1.000}\\
Geometric & .4 & 4 & 361.88 & 2.15 & \best{1.000}\\
\bottomrule
\end{tabular}

\end{table}

\FloatBarrier
\subsubsection{Prior support and representation floors}
\label{sec:prior_results}
\label{app:prior_tables}
\label{sec:error_results}
Table~\ref{tab:priors} reports the primary prior comparison, and Figure~\ref{fig:decomposition} separates representation and fitting errors. No primary bounded-Gaussian comparison meets both thresholds (0.005 absolute and 5\% relative MSE gain). Three of 160 remaining contrasts exceed the harm threshold. All occur in stochastic-block/G.4/cutoff/30-label/$\rho=0$: bounded-prior MSE exceeds Full, Uniform and Chebyshev by 0.007495, 0.007295 and 0.007531. The selected degree is four for all procedures there. Equivalent Hermite/Chebyshev predictions agree to $4.76\times10^{-14}$. No fits fail or selected strengths reach the upper grid edge.

The complete 48-condition prior factorial, including every mean and SD, is retained in the experimental archive. Across 48 conditions, Full/Bounded/Uniform/Chebyshev/Zero select degree two in 10/12/12/10/13 cases. The first four select zero roughness in 21/20/19/19. Shared zero objectives are not independent replications. Fixed-degree primary means also lack a uniform bounded-prior advantage.

The post hoc oracle decomposition uses saved predictions and noiseless evaluation targets without changing fits. The representation floor accounts for 80.86\%/85.32\% of primary bounded-prior loss on stochastic-block/geometric graphs, rising to 84.89\%/88.55\% under mismatch. Without mismatch, total MSE is about 0.0160/0.0177. Small selected-degree gains over zero roughness combine lower floors with higher within-space errors: $0.000036=0.001560-0.001524$ and $0.001914=0.002916-0.001002$. They do not establish better estimation in a common space. Degree-four oracle improvements of roughly 13.8--19.3\% remain algebraically possible, without a guarantee of attainability from the available labels.

\begin{table}[!htbp]
\centering
\caption{Noiseless test MSE for the primary derivative-prior comparison. Each entry averages the four label-count/mismatch conditions within each graph, then reports mean $\pm$ sample SD over 12 graphs. Degrees and penalty strengths are selected by noisy-validation MSE. All procedures include the same response ridge. Lower values are better.}
\label{tab:priors}
\small
\begin{tabular}{@{}lcc@{}}
\toprule
Derivative prior & SBM & Geometric\\
\midrule
Full Gaussian & $0.1689\pm0.0386$ & $0.1904\pm0.0492$\\
Bounded Gaussian & $\best{0.1686}\pm0.0382$ & $0.1917\pm0.0499$\\
Uniform derivative & $\best{0.1686}\pm0.0384$ & $\best{0.1879}\pm0.0470$\\
Chebyshev degree & $\best{0.1686}\pm0.0385$ & $0.1885\pm0.0476$\\
Zero roughness & $0.1687\pm0.0385$ & $0.1936\pm0.0485$\\
\bottomrule
\end{tabular}

\end{table}

\begin{figure}[!htbp]
\centering
\includegraphics[width=.82\linewidth]{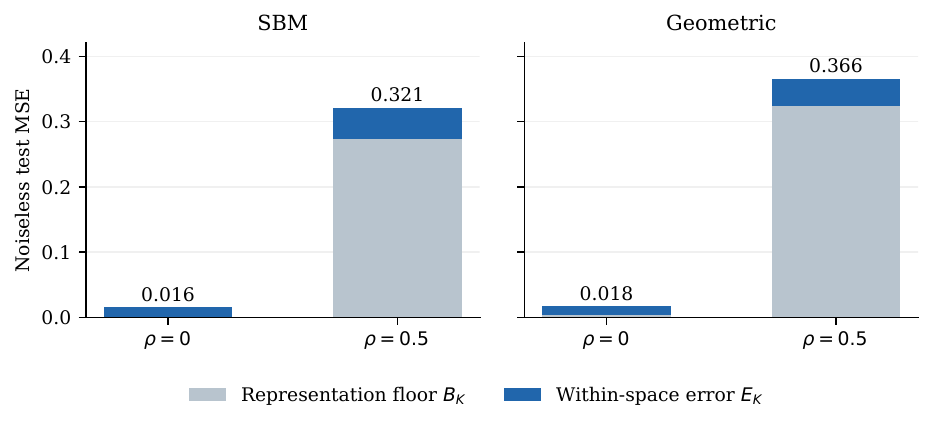}
\caption{Error decomposition for the bounded Gaussian prior with width-$0.4$ energy and a smooth target. Bars average the two label counts within each graph and then 12 graphs per family. The input mismatch changes the observed signal while leaving the target fixed. Most error at $\rho=0.5$ is the representation floor of that input's polynomial responses. Numbers above bars are total MSE. The oracle projection uses evaluation targets and is not available to the fitted learner.}
\label{fig:decomposition}
\end{figure}

\FloatBarrier
\subsection{Precision, sparsity and computational cost}
\label{sec:representation_diagnostics}
Appendix~\ref{app:constrained_protocol} defines the oracle diagnostics. Table~\ref{tab:precision_primary} gives the primary results and Figure~\ref{fig:precision_sensitivity} their sensitivity and cost. Degree-four coarse-rounding gains are conditional: fixed Legendre slightly improves wave distortion (0.2905\% versus Hermite 0.2930\%), and moment Legendre improves smooth distortion (0.3740\% versus 0.3782\%) and analytic sign accuracy by 0.0814 points. Higher degree and dominant weights reverse aggregate orderings. Sparse Hermite often improves on sparse QR, but standard-coordinate controls perform better throughout the tested settings except numerical ties. These constrained dictionaries differ despite spanning the same unrestricted polynomial space.

CPU times use prepared inputs and a fixed update count, not a common accuracy target. Repetition variability and the small setup fraction limit inferences about calibration cost. The results do not show a Hermite total-time advantage.

\begin{table}[!htbp]
\centering\footnotesize\setlength{\tabcolsep}{4pt}
\caption{Primary oracle diagnostics on eight balanced graphs (mean $\pm$ SD). Rounding uses degree four, four bits and one scale, averaged over three target groups. Sparse projection uses degree eight, moderate frequencies and three terms. M/F: moment/fixed coordinates. Error, distortion and accuracy are percentages, and loss is in percentage points. Bold marks column optima. $\dagger$ identifies the Hermite rounding result. These are oracle measurements.}
\label{tab:precision_primary}\label{tab:sparse_primary}
\begin{tabular}{@{}lcccc@{}}
\toprule
 & \multicolumn{2}{c}{Coefficient rounding} & \multicolumn{2}{c}{Sparse projection}\\
\cmidrule(lr){2-3}\cmidrule(l){4-5}
Representation & Distortion & Oracle loss & Error & Oracle accuracy\\
\midrule
\texttt{Hermite} (M) & $\key{0.3284}\pm0.0369$ & $\key{0.1417}\pm0.0091$ & $25.57\pm0.58$ & $81.57\pm0.19$\\
\texttt{Chebyshev} (M) & $0.9170\pm0.2363$ & $0.3171\pm0.0621$ & $44.55\pm0.22$ & $75.70\pm0.07$\\
\texttt{Bernstein} (M) & $10.3032\pm3.0249$ & $3.6358\pm0.4293$ & $84.80\pm0.29$ & $62.35\pm0.12$\\
\texttt{Jacobi} (M) & $0.9988\pm0.0284$ & $0.2849\pm0.0092$ & $48.96\pm0.20$ & $74.38\pm0.06$\\
\texttt{Legendre} (M) & $0.9294\pm0.0546$ & $0.3492\pm0.0076$ & $47.58\pm0.21$ & $74.79\pm0.06$\\
\texttt{Power} (M) & $1.1265\pm0.1040$ & $0.3490\pm0.0286$ & $50.80\pm0.03$ & $73.82\pm0.01$\\
\texttt{Hermite} (F) & $116.2767\pm131.9582$ & $18.1592\pm0.8460$ & $30.97\pm0.25$ & $79.84\pm0.08$\\
\texttt{Chebyshev} (F) & $2.4314\pm0.7517$ & $0.7226\pm0.1728$ & $\best{5.48}\pm0.03$ & $\best{89.22}\pm0.01$\\
\texttt{Bernstein} (F) & $5.8328\pm1.8627$ & $1.9140\pm0.3507$ & $11.12\pm0.10$ & $86.75\pm0.04$\\
\texttt{Jacobi} (F) & $0.7541\pm0.1386$ & $0.2744\pm0.0261$ & $11.59\pm0.09$ & $86.56\pm0.03$\\
\texttt{Legendre} (F) & $0.9375\pm0.0963$ & $0.2728\pm0.0124$ & $7.59\pm0.09$ & $88.25\pm0.04$\\
\texttt{Power} (F) & $1.1265\pm0.1040$ & $0.3490\pm0.0286$ & $50.80\pm0.03$ & $73.82\pm0.01$\\
\texttt{QR} & $0.4457\pm0.0218$ & $0.2004\pm0.0057$ & $34.73\pm0.20$ & $78.67\pm0.06$\\
\bottomrule
\end{tabular}
\end{table}

\begin{figure}[!htbp]
\centering
\includegraphics[width=0.96\linewidth]{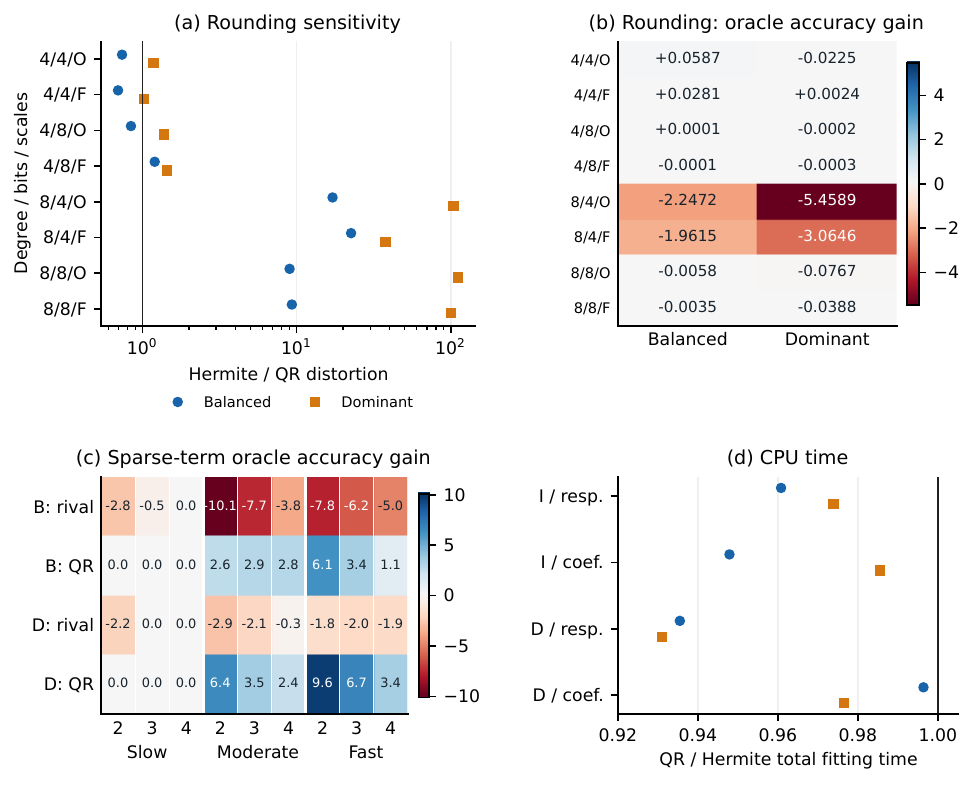}
\caption{Numerical sensitivity and CPU cost. (a) Hermite/QR rounding distortion, with O/F denoting one/four scales. (b) Hermite-minus-QR oracle accuracy (pp). (c) Hermite-minus-comparator sparse accuracy (pp), against the largest-mean rival or QR. B/D denote balanced/dominant graphs, and columns give retained terms. Positive differences favor Hermite. (d) Median paired QR/Hermite fitting-time ratios across eight graphs. I/D denote input/dropout calibration, and resp./coef. denote transform placement. Every ratio is below one. QR setup shares are 0.250\%--0.457\%. Exact values and comparator identities remain in the numerical archive.}
\label{fig:precision_sensitivity}
\label{fig:sparse_sensitivity}
\label{fig:total_cost}
\end{figure}

\FloatBarrier
\end{document}